\def\eqref#1{equation~\ref{#1}}
\def\1{\bm{1}}
\DeclareMathAlphabet{\mathsfit}{\encodingdefault}{\sfdefault}{m}{sl}
\SetMathAlphabet{\mathsfit}{bold}{\encodingdefault}{\sfdefault}{bx}{n}
\def\gH{{\mathcal{H}}}
\def\gL{{\mathcal{L}}}
\def\sR{{\mathbb{R}}}
\def\sS{{\mathbb{S}}}
\newcommand{\addon}[1]{{\normalsize \color{gray} \mdseries [#1]}}
\newtheorem{theorem}{Theorem}
\title{Stabilized Neural Differential Equations for Learning Dynamics with Explicit Constraints}
\author{
  Alistair White \\
  Technical University of Munich \\
  Potsdam Institute for Climate Impact Research
  \And
  Niki Kilbertus \\
  Technical University of Munich\\
  Hemholtz AI, Munich
  \And
  Maximilian Gelbrecht \\
  Technical University of Munich \\
  Potsdam Institute for Climate Impact Research
  \And
  Niklas Boers \\
  Technical University of Munich \\
  Potsdam Institute for Climate Impact Research\\
  University of Exeter
  \AND
  \texttt{\{alistair.white, niki.kilbertus, maximilian.gelbrecht, n.boers\}@tum.de}
}
\begin{document}

\maketitle

\begin{abstract}
Many successful methods to learn dynamical systems from data have recently been introduced.
However, ensuring that the inferred dynamics preserve known constraints, such as conservation laws or restrictions on the allowed system states, remains challenging.
We propose \emph{stabilized neural differential equations} (SNDEs), a method to enforce arbitrary manifold constraints for neural differential equations.
Our approach is based on a stabilization term that, when added to the original dynamics, renders the constraint manifold provably asymptotically stable.
Due to its simplicity, our method is compatible with all common neural differential equation (NDE) models and broadly applicable.
In extensive empirical evaluations, we demonstrate that SNDEs outperform existing methods while broadening the types of constraints that can be incorporated into NDE training.
\end{abstract}

\section{Introduction}
Advances in machine learning have recently spurred hopes of displacing or at least enhancing the process of scientific discovery by inferring natural laws directly from observational data.
In particular, there has been a surge of interest in data-driven methods for learning dynamical laws in the form of differential equations directly from data \citep{chen2018neural,rackauckas2020universal,brunton2016discovering,cranmer2023interpretable,aliee2021beyond,becker2023discovering}.
Assuming there is a ground truth system with dynamics governed by an ordinary differential equation
\begin{equation}\label{eq:ode}
  \frac{du(t)}{dt} = f(u(t), t) \quad \text{(with initial condition } u(0) = u_0)
\end{equation}
with $u : \sR \to \sR^n$ and $f: \sR^n \times \sR \to \sR^n$, the question is whether we can learn $f$ from (potentially noisy and irregularly sampled) observations $(t_i, u(t_i))_{i=1}^N$.

Neural ordinary differential equations (NODEs) provide a prominent and successful method for this task, which leverages machine learning by directly parameterizing the vector field $f$ of the ODE as a neural network \citep{chen2018neural} (see also \citet{kidger2021on} for an overview).
A related approach, termed universal differential equations (UDEs) \citep{rackauckas2020universal}, combines mechanistic or process-based model components with universal function approximators, typically also neural networks. 
In this paper, we will refer to these methods collectively as \emph{neural differential equations} (NDEs), meaning any ordinary differential equation model in explicit form, where the right-hand side is either partially or entirely parameterized by a neural network.
Due to the use of flexible neural networks, NDEs have certain universal approximation properties \citep{teshima2020universal,zhang2020approximation}, which are often interpreted as ``in principle an NDE can learn any vector field $f$'' \citep{dupont2019augmented}.
While this can be a desirable property in terms of applicability, in typical settings one often has prior knowledge about the dynamical system that should be incorporated.

As in other areas of machine learning -- particularly deep learning -- inductive biases can substantially aid generalization, learning speed, and stability, especially in the low data regime.
Learning dynamics from data is no exception \citep{aliee2022sparsity}.
In scientific applications, physical priors are often not only a natural source of inductive biases, but can even impose hard constraints on the allowed dynamics.
For instance, when observing mechanical systems, a popular approach is to directly parameterize either the Lagrangian or Hamiltonian via a neural network \citep{greydanus2019hamiltonian,lutter2019deep,cranmer2020lagrangian}.
Constraints such as energy conservation can then be ``baked into the model'', in the sense that the parameterization of the vector field is designed to only represent functions that satisfy the constraints.
\citet{finzi2020simplifying} build upon these works and demonstrate how to impose explicit constraints in the Hamiltonian and Lagrangian settings.

In this work, we propose a stabilization technique to enforce arbitrary, even time-dependent manifold constraints for any class of NDEs, not limited to second order mechanical systems and not requiring observations in particular (canonical) coordinates.
Our method is compatible with all common differential equation solvers as well as adjoint sensitivity methods.
All code is publicly available at \url{https://github.com/white-alistair/Stabilized-Neural-Differential-Equations}.

\begin{figure} 
  \centering
  \vspace{-8mm}
  \includegraphics[width=\textwidth]{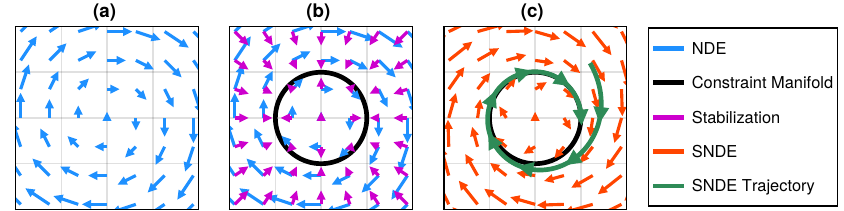}
  \vspace{-2mm}
  \caption{
    Sketch of the basic idea behind stabilized neural differential equations (SNDEs).
    \textbf{(a)} An idealized, unstabilized NDE vector field (blue arrows).
    \textbf{(b)} A constraint manifold (black circle) and the corresponding stabilization of the vector field (pink arrows).
    \textbf{(c)} The overall, stabilized vector field of the SNDE (orange arrows) and a stabilized trajectory (green).
    The stabilization pushes any trajectory starting away from (but near) the manifold to converge to it at a rate $\gamma$ (see \cref{sndes}).
  }\label{fig:vector_field}
  \vspace{-3mm}
\end{figure}

\section{Background}
\label{background}
\vspace{-1mm}

A first order neural differential equation is typically given as
\begin{equation} 
\label{nde}
    \frac{du(t)}{dt} = f_{\theta}(u(t), t) \qquad u(0) = u_0,
\end{equation}
where $u: \sR \to \sR^n$ and the vector field $f_{\theta} \colon \sR^n \times \sR \to \sR^n$ is at least partially parameterized by a neural network with parameters $\theta \in \sR^d$.
We restrict our attention to ground truth dynamics $f$ in \cref{eq:ode} that are continuous in $t$ and Lipschitz continuous in $u$ such that the existence of a unique (local) solution to the initial value problem is guaranteed by the Picard-Lindelöf theorem.
As universal function approximators \citep{hornik1989multilayer,cybenko1989approximation}, neural networks $f_{\theta}$ can in principle approximate any such $f$ to arbitrary precision, that is to say, the problem of learning $f$ is realizable.

In practice, the parameters $\theta$ are typically optimized via stochastic gradient descent by integrating a trajectory $\hat u(t) = \mathrm{ODESolve}(u_0, f_{\theta}, t)$ and taking gradients of the loss $\mathcal{L}(u, \hat u)$ with respect to $\theta$.
Gradients of trajectories can be computed using adjoint sensitivity analysis (\emph{optimize-then-discretize}) or automatic differentiation of the solver operations (\emph{discretize-then-optimize}) \citep{chen2018neural,kidger2021on}.
While these approaches have different (dis)advantages \citep{ma2021comparison,kidger2021on}, we use the adjoint sensitivity method due to reportedly improved stability \citep{chen2018neural}.
We use the standard squared error loss $\gL(u, \hat{u}) = \| u - \hat{u} \|_2^2$ throughout.

We focus on NODEs as they can handle arbitrary nonlinear vector fields $f$ and outperform traditional ODE parameter estimation techniques.
In particular, they do not require a pre-specified parameterization of $f$ in terms of a small set of semantically meaningful parameters.
The original NODE model \citep{chen2018neural} has quickly been extended to augmented neural ODEs (ANODEs), with improved universal approximation properties \citep{dupont2019augmented} and performance on second order systems \citep{norcliffe2020on}.
Further extensions of NODEs include support for irregularly-sampled observations \citep{rubanova2019latent} and partial differential equations \citep{gelbrecht2021neural},
Bayesian NODEs~\citep{dandekar2020bayesian}, universal differential equations \citep{rackauckas2020universal}, and more--see, e.g., \citet{kidger2021on} for an overview.
All such variants fall under our definition of neural differential equations and can in principle be stabilized using our method.
We demonstrate this empirically by stabilizing both standard NODEs as well as ANODEs and UDEs in this paper.
Finally, we note that all experiments with second order structure are implemented as second order neural ODEs due to reportedly improved generalization compared to first order NODEs \citep{norcliffe2020on,gruver2022deconstructing}.

NODEs were originally introduced as the infinite depth limit of residual neural networks (typically for classification), where there is no single true underlying dynamic law, but rather ``some'' vector field $f$ is learned that allows subsequent (linear) separation of the inputs into different classes.
A number of techniques have been introduced to restrict the number of function evaluations needed during training in order to improve efficiency, which also typically results in relatively simple learned dynamics \citep{finlay2020train,ghosh2020steer,kelly2020learning,pal2021opening,kidger2021hey}.
These are orthogonal to our method and are mentioned here for completeness, as they could also be viewed as ``regularized'' or ``stabilized'' NODEs from a different perspective.

\section{Related Work}
\label{related_work}
\vspace{-1mm}
\paragraph{Hamiltonian and Lagrangian Neural Networks.}
A large body of related work has focused on learning Hamiltonian or Lagrangian dynamics via architectural constraints.
By directly parameterizing and learning the underlying Hamiltonian or Lagrangian, rather than the equations of motion, first integrals corresponding to symmetries of the Hamiltonian or Lagrangian may be (approximately) conserved automatically.

Hamiltonian Neural Networks (HNNs) \citep{greydanus2019hamiltonian} assume second order Hamiltonian dynamics $u(t)$, where $u(t) = (q(t), p(t))$ is measured in canonical coordinates, and directly parameterize the Hamiltonian $\gH$ (instead of $f$) from which the vector field can then be derived via $f(q, p) = (\tfrac{\partial \gH}{\partial p}, - \tfrac{\partial \gH}{\partial q})$.
Symplectic ODE-Net (SymODEN) \citep{zhong2019symplectic} takes a similar approach but makes the learned dynamics symplectic by construction, as well as allowing more general coordinate systems such as angles or velocities (instead of conjugate momenta).
HNNs have also been made agnostic to the coordinate system altogether by modeling the underlying coordinate-free symplectic two-form directly \citep{chen2021neural}, studied extensively with respect to the importance of symplectic integrators \citep{zhu2020deep}, and adapted specifically to robotic systems measured in terms of their SE(3) pose and generalized velocity \citep{duong2021hamiltonian}.
Recently, \citet{gruver2022deconstructing} showed that what makes HNNs effective in practice is not so much their built-in energy conservation or symplectic structure, but rather that they inherently assume that the system is governed by a single second order differential equation (``second order bias'').
\citet{chen2022learning} provide a recent overview of learning Hamiltonian dynamics using neural architectures.

A related line of work is Lagrangian Neural Networks (LNNs), which instead assume second order Lagrangian dynamics and parameterize the inertia matrix and divergence of the potential \citep{lutter2019deep}, or indeed any generic Lagrangian function \citep{cranmer2020lagrangian}, with the dynamics $f$ then uniquely determined via the Euler-Lagrange equations.
While our own work is related to HNNs and LNNs in purpose, it is more general, being applicable to any system governed by a differential equation.
In addition, where we consider fundamental conservation laws as constraints, e.g. conservation of energy, we assume the conservation law is known in closed-form.

The work most closely related to ours is by \citet{finzi2020simplifying}, who demonstrate how to impose explicit constraints on HNNs and LNNs.
The present work differs in a number of ways with the key advances of our approach being that SNDEs (a) are applicable to any type of ODE, allowing us to go beyond second order systems with primarily Hamiltonian or Lagrangian type constraints, (b) are compatible with hybrid models, i.e., the UDE approach where part of the dynamics is assumed to be known and only the remaining unknown part is learned while still constraining the overall dynamics, and (c) can incorporate any type of manifold constraint.
Regarding (c), we can for instance also enforce time-dependent first integrals,
which do not correspond to constants of motion or conserved quantities arising directly from symmetries in the Lagrangian.

\paragraph{Continuous Normalizing Flows on Riemannian Manifolds.}
Another large body of related work aims to learn continuous normalizing flows (CNFs) on Riemannian manifolds.
Since CNFs transform probability densities via a NODE, many of these methods work in practice by constraining NODE trajectories to a pre-specified Riemannian manifold.
For example, \citet{lou2020neural} propose ``Neural Manifold ODE'', a method that directly adjusts the forward mode integration and backward mode adjoint gradient computation to ensure that the trajectory is confined to a given manifold.
This approach requires a new training procedure and relies on an explicit chart representation of the manifold.
The authors limit their attention to the hyperbolic space $\mathbb{H}^2$ and the sphere $\mathbb{S}^2$, both of which are prototypical Riemannian manifolds with easily derived closed-form expressions for the chart.
Many similar works have been proposed \citep{gemici2016normalizing,bose2020latent,rezende2020normalizing,mathieu2020riemannian}, but all are typically limited in scope to model Riemannian manifolds such as spheres, hyperbola, and tori.
Extending these approaches to the manifolds studied by us, which can possess nontrivial geometries arising from arbitrary constraints, is not straightforward.
Nonetheless, we consider this an interesting opportunity for further work.

\paragraph{Riemannian Optimization.}
A vast literature exists for optimization on Riemannian manifolds \citep{absil2008optimization,lezcano2019trivializations,nishimori1999learning}.
In deep learning, in particular, orthogonality constraints have been used to avoid the vanishing and exploding gradients problem in recurrent neural networks \citep{arjovsky2016unitary,lezcano2019cheap,ablin2022fast,kiani2022proj} and as a form of regularization for convolutional neural networks \citep{bansal2018can}.
Where these methods differ crucially from this paper is that they seek to constrain neural network \emph{weights} to a given matrix manifold, while our aim is to constrain \emph{trajectories} of an NDE, the weights of which are not themselves directly constrained.

\paragraph{Constraints via Regularization.}
Instead of adapting the neural network architecture to satisfy certain properties by design, \citet{lim2022unifying} instead manually craft different types of regularization terms that, when added to the loss function, aim to enforce different constraints or conservation laws.
While similar to our approach, in that no special architecture is required for different constraints, the key difference is that their approach requires crafting specific loss terms for different types of dynamics.
Moreover, tuning the regularization parameter can be rather difficult in practice.

In this work, we vastly broaden the scope of learning constrained dynamics by demonstrating the effectiveness of our approach on both first and second order systems, including chaotic and non-chaotic as well as autonomous and non-autonomous examples.
We cover constraints arising from holonomic restrictions on system states, conservation laws, and constraints imposed by controls.

\section{Stabilized Neural Differential Equations}
\label{sndes}
\vspace{-1mm}

\paragraph{General approach.}
Given $m < n$ explicit constraints, we require that solution trajectories of the NDE in \cref{nde} are confined to an $(n-m)$-dimensional submanifold of $\sR^n$ defined by
\begin{equation}\label{manifold}
    \mathcal M = \{(u,t) \in \mathbb{R}^n \times \mathbb{R} \,;\, g(u,t) = 0 \},
\end{equation}
where $g \colon \sR^n \times \mathbb{R} \to \sR^m$ is a smooth function with $0 \in \sR^m$ being a regular value of $g$.\footnote{The preimage theorem ensures that $\mathcal{M}$ is indeed an $(n-m)$-dimensional submanifold of $\mathbb{R}^n$.}
In other words, we have an NDE on a manifold
  \begin{equation}\label{nde_manifold}
    \dot u = f_{\theta}(u, t) \quad \text{ with } \quad
    g(u,t) = 0.
  \end{equation}
Any non-autonomous system can equivalently be represented as an autonomous system by adding time as an additional coordinate with constant derivative $1$ and initial condition $t_0=0$.
For ease of notation, and without loss of generality, we will only consider autonomous systems in the rest of this section.
However, we stress that our method applies equally to autonomous and non-autonomous systems.

While methods exist for constraining neural network outputs to lie on a pre-specified manifold, the added difficulty in our setting is that we learn the vector field $f$ but constrain the solution trajectory $u$ that solves a given initial value problem for the ODE defined by $f$.
Inspired by \citet{chin1995stabilization}, we propose the following stabilization of the vector field in \cref{nde_manifold}
\begin{equation}
  \label{snde_general}
  \boxed{
    \dot u = f_{\theta}(u) - \gamma F(u) g(u),
  }
  \qquad \text{\color{gray}[general SNDE]}
\end{equation}
where $\gamma \ge 0$ is a scalar parameter of our method and $F : \sR^n \to \sR^{n \times m}$ is a so-called \emph{stabilization matrix}.
We call \cref{snde_general} a \emph{stabilized neural differential equation} (SNDE) and say that it is stabilized with respect to the invariant manifold $\mathcal{M}$.
We illustrate the main idea in \cref{fig:vector_field}; while even small deviations in the unstabilized vector field (left) can lead to (potentially accumulating) constraint violations, our stabilization (right) adjusts the vector field near the invariant manifold to render it asymptotically stable, all the while leaving the vector field on $\mathcal{M}$ unaffected.

Our stabilization approach is related to the practice of index reduction of differential algebraic equations (DAEs).
We refer the interested reader to \cref{app:daes} for a brief overview of these connections.

\paragraph{Theoretical guarantees.} 
First, note that ultimately we still want $f_{\theta}$ to approximate the assumed ground truth dynamics $f$.
However, \cref{snde_general} explicitly modifies the right-hand side of the NDE.
The following theorem provides necessary and sufficient conditions under which $f_{\theta}$ can still learn the correct dynamics when using a different right-hand side.
\begin{theorem}[adapted from \citet{chin1995stabilization}]\label{theorem1}
  Consider an NDE
  \begin{equation}\label{eq:prop1}
    \dot u = f_{\theta}(u)
  \end{equation} 
  on an invariant manifold $\mathcal{M} = \{u \in \sR^n \,;\, g(u) = 0\}$. A vector field
    $\dot u = h_{\theta}(u)$
  admits all solutions of \cref{eq:prop1} on $\mathcal{M}$ if and only if
    $h_{\theta}|_{\mathcal{M}} = f_{\theta}|_{\mathcal{M}}.$
\end{theorem}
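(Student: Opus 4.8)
The plan is to first pin down the meaning of the phrase ``admits all solutions of \cref{eq:prop1} on $\mathcal{M}$'': I read it as saying that $h_\theta$ admits all such solutions precisely when every trajectory $u(t)$ that solves $\dot u = f_\theta(u)$ and satisfies $g(u(t)) = 0$ for all $t$ is simultaneously a solution of $\dot u = h_\theta(u)$. Under this reading the equivalence becomes a pointwise statement about the two vector fields restricted to $\mathcal{M}$, and both directions reduce to evaluating an ODE along an on-manifold trajectory.

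For the easy direction ($\Leftarrow$), I would suppose $h_\theta|_{\mathcal{M}} = f_\theta|_{\mathcal{M}}$ and let $u(t)$ be any solution of \cref{eq:prop1} lying entirely on $\mathcal{M}$, so that $u(t) \in \mathcal{M}$ for every $t$. Then for each $t$ we have $\dot u(t) = f_\theta(u(t)) = h_\theta(u(t))$, where the second equality uses that $u(t) \in \mathcal{M}$ and the two fields agree there. Hence $u$ solves $\dot u = h_\theta(u)$ as well, which is exactly what ``admits'' demands. Note that no regularity of $h_\theta$ beyond agreement on $\mathcal{M}$ is needed here.

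For the converse ($\Rightarrow$), the key is that $\mathcal{M}$ is assumed to be an \emph{invariant} manifold for $f_\theta$, which I would combine with the Picard--Lindelöf theorem to produce, through an arbitrary point of $\mathcal{M}$, a trajectory that remains on $\mathcal{M}$. Concretely, fix $p \in \mathcal{M}$. Since $f_\theta$ is Lipschitz, there is a unique local solution $u(t)$ of $\dot u = f_\theta(u)$ with $u(0) = p$, and by invariance of $\mathcal{M}$ this solution satisfies $g(u(t)) = 0$ throughout its existence interval. Thus $u$ is a solution of \cref{eq:prop1} on $\mathcal{M}$, and by hypothesis it must also solve $\dot u = h_\theta(u)$. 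Evaluating both right-hand sides at $t = 0$ gives $f_\theta(p) = \dot u(0) = h_\theta(p)$, and since $p \in \mathcal{M}$ was arbitrary this yields $f_\theta|_{\mathcal{M}} = h_\theta|_{\mathcal{M}}$.

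I expect the only genuine obstacle to be this forward direction, and specifically the need to guarantee that \emph{some} on-manifold trajectory of $f_\theta$ passes through each point of $\mathcal{M}$; this is exactly what invariance supplies (equivalently, $f_\theta$ is everywhere tangent to $\mathcal{M}$, i.e.\ $Dg(p)\, f_\theta(p) = 0$ for $p \in \mathcal{M}$, obtained by differentiating $g(u(t)) = 0$). Without invariance, trajectories could leave $\mathcal{M}$ instantaneously, the family of on-manifold solutions could fail to cover $\mathcal{M}$, and the stated equivalence would break down. I would therefore make the invariance assumption explicit at the outset and remark that the Lipschitz continuity needed for existence and uniqueness holds for the neural-network vector fields considered here.
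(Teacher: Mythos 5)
Your proof is correct, but note that the paper itself never proves \cref{theorem1}: it is stated as adapted from \citet{CH95}, and the text only \emph{applies} it by observing that the stabilization term $-\gamma F(u)g(u)$ vanishes on $\mathcal{M}$, so there is no in-paper argument to compare against. Your two directions are exactly the right ones, and you correctly isolate the crux: the trivial direction needs nothing beyond agreement of the fields on $\mathcal{M}$, while the forward direction needs an on-manifold trajectory of $f_\theta$ through each point $p \in \mathcal{M}$, which is supplied by invariance (tangency $Dg(p)\,f_\theta(p) = 0$) together with local existence --- for which continuity of $f_\theta$ already suffices; the Lipschitz assumption and uniqueness from Picard--Lindel\"of are harmless but not actually needed. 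Your reading of ``admits all solutions'' also matches the paper's usage, since the paper invokes the theorem precisely to conclude that every solution of the constrained NDE \cref{nde_manifold} remains a solution of the SNDE \cref{snde_general}.
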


Since $g(u) = 0$ on $\mathcal{M}$, the second term on the right-hand side of \cref{snde_general} vanishes on $\mathcal{M}$. 
Therefore the SNDE \cref{snde_general} admits all solutions of the constrained NDE \cref{nde_manifold}.
Next, we will show that, under mild conditions, the additional stabilization term in \cref{snde_general} ``nudges'' the solution trajectory to lie on the constraint manifold such that $\mathcal{M}$ is asymptotically stable.
\begin{theorem}[adapted from \citet{chin1995stabilization}\footnote{We note that there is a minor error in the proof of \cref{theorem2} in \citet{chin1995stabilization}, which we correct here.}]
  \label{theorem2}
  Suppose the stabilization matrix $F(u)$ is chosen such that the matrix $G(u)F(u)$, where $G(u) = g_u$ is the Jacobian of $g$ at $u$, is symmetric positive definite with the smallest eigenvalue $\lambda(u)$ satisfying $\lambda(u) > \lambda_0 > 0$ for all $u$.
  Assume further that there is a positive number $\gamma_0$ such that
  \begin{equation}\label{eq:thm2_cond}
    \lVert G(u)f_{\theta}(u)\rVert \le \gamma_0 \lVert g(u) \rVert
  \end{equation}
  for all $u$ near $\mathcal{M}$. 
  Then the invariant manifold $\mathcal{M}$ is asymptotically stable in the SNDE \cref{snde_general} if $\gamma \geq \gamma_0 / \lambda_0$.
\end{theorem}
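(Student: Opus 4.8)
The plan is to exhibit a Lyapunov function for the set $\mathcal{M}$ and invoke the Lyapunov stability theorem for invariant sets. The natural candidate is
\[
  V(u) = \tfrac{1}{2}\lVert g(u)\rVert^2,
\]
which is smooth, nonnegative, and---because $0$ is a regular value of $g$---vanishes exactly on $\mathcal{M}$. First I would record that the regular-value assumption makes $V$ a genuine distance-like function near $\mathcal{M}$: since $G(u) = g_u$ has full row rank $m$ in a neighborhood of the manifold, $\lVert g(u)\rVert$ is bounded above and below by constant multiples of $\mathrm{dist}(u,\mathcal{M})$ locally, so $V \to 0$ along a trajectory is equivalent to that trajectory converging to $\mathcal{M}$. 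This reduces the asymptotic-stability claim to showing that $V$ decreases strictly along solutions of the SNDE whenever $u \notin \mathcal{M}$.

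Next I would differentiate $V$ along \cref{snde_general}. By the chain rule $\tfrac{d}{dt} g(u) = G(u)\dot u$, so
\[
  \dot V = g(u)^\top G(u)\dot u = g(u)^\top G(u) f_{\theta}(u) - \gamma\, g(u)^\top G(u) F(u) g(u).
\]
The two terms are bounded separately. For the first (potentially destabilizing) term, Cauchy--Schwarz together with the growth assumption \cref{eq:thm2_cond} gives $g(u)^\top G(u) f_{\theta}(u) \le \lVert g(u)\rVert\,\lVert G(u) f_{\theta}(u)\rVert \le \gamma_0 \lVert g(u)\rVert^2$. For the second (stabilizing) term, the hypothesis that $G(u)F(u)$ is symmetric positive definite with smallest eigenvalue $\lambda(u) > \lambda_0$ yields the quadratic-form lower bound $g(u)^\top G(u) F(u) g(u) \ge \lambda(u)\lVert g(u)\rVert^2 > \lambda_0 \lVert g(u)\rVert^2$.

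Combining the two bounds gives $\dot V \le (\gamma_0 - \gamma\lambda_0)\lVert g(u)\rVert^2 = 2(\gamma_0 - \gamma\lambda_0) V$. Choosing $\gamma \ge \gamma_0/\lambda_0$ makes the coefficient nonpositive, and the strict eigenvalue bound $\lambda(u) > \lambda_0$ renders $\dot V$ strictly negative whenever $g(u) \neq 0$; Gr\"onwall's inequality then forces $V$ to decay to zero (exponentially, with rate controlled by $\gamma$, once the inequality is strict). By the distance equivalence established in the first step, this means every trajectory starting near $\mathcal{M}$ stays near it and converges to it, i.e., $\mathcal{M}$ is asymptotically stable. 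I expect the main obstacle to be that first step rather than the computation: one must justify carefully that $\lVert g(u)\rVert$ genuinely controls the Euclidean distance to $\mathcal{M}$---this is what upgrades ``$V \to 0$'' to set-convergence, and it is precisely where the regular-value / preimage-theorem hypothesis does its work---while keeping the entire argument inside the neighborhood where \cref{eq:thm2_cond} and the eigenvalue bound are assumed to hold, consistent with asymptotic stability being a local property.
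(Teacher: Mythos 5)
Your proof is correct and takes essentially the same route as the paper's: the identical Lyapunov function $V(u)=\tfrac{1}{2}\lVert g(u)\rVert^2$, the same derivative computation along \cref{snde_general}, and the same two bounds combining to give $\dot V \le (\gamma_0 - \gamma\lambda_0)\lVert g\rVert^2$. The only difference is that you spell out steps the paper leaves implicit---the Cauchy--Schwarz inequality, the local equivalence of $\lVert g(u)\rVert$ with the distance to $\mathcal{M}$ via the regular-value assumption, and the strictness needed to pass from $\dot V \le 0$ to genuine asymptotic stability---which tightens rather than changes the argument.
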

\begin{proof}
  Consider the Lyapunov function $V(u) = \frac{1}{2}g^T(u)g(u)$.
  Then (omitting arguments)
\begin{equation}
  \frac{d}{dt} V(t) = 
      \frac{1}{2}\frac{d}{dt}\lVert g(u(t))\rVert^2 = g^T\frac{dg}{dt}
      = g^T \frac{dg}{du} \dot{u}
      = g^T G (f_{\theta} - \gamma F g),
\end{equation}
where we substitute \cref{snde_general} for $\dot{u}$.
With \cref{eq:thm2_cond}, we have $g^T G f_{\theta} \le \gamma_0 g^T g$, and since the eigenvalues of $G F$ are assumed to be at least $\lambda_0 > 0$, we have $g^T G F g \ge \lambda_0 g^T g$.
Hence
\begin{equation}
  \frac{d}{dt} V \le (\gamma_0 - \gamma \lambda_0) \|g\|^2,
\end{equation}
so the manifold $\mathcal{M}$ is asymptotically stable whenever $\gamma_0 - \gamma \lambda_0 \le 0$.
\end{proof}
When $f_{\theta}(u)$ and $g(u)$ are given, $\mathcal{M}$ is asymptotically stable in the SNDE \cref{snde_general} as long as
\begin{equation}
 \gamma \geq \frac{\lVert G(u)f_{\theta}(u) \rVert}{\lambda_0 \lVert g(u) \rVert}.
\end{equation}
To summarize, the general form of the SNDE in \cref{snde_general} has the following important properties:
\begin{enumerate}[itemsep=0pt]
  \item The SNDE admits all solutions of the constrained NDE \cref{nde_manifold} on $\mathcal{M}$.
  \item $\mathcal{M}$ is asymptotically stable in the SNDE for sufficiently large values of $\gamma$.
\end{enumerate}
The stabilization parameter $\gamma$, with units of inverse time, determines the rate of relaxation to the invariant manifold; intuitively, it is the strength of the ``nudge towards $\mathcal{M}$'' experienced by a trajectory.
Here, $\gamma$ is neither a Lagrangian parameter (corresponding to a constraint on $\theta$), nor a regularization parameter (to overcome an ill-posedness by regularization). 
Therefore, there is no ``correct'' value for $\gamma$.
In particular, \cref{theorem1} holds for all $\gamma$, while \Cref{theorem2} only requires $\gamma$ to be ``sufficiently large''.

In the limit $\gamma \to \infty$, the SNDE in \cref{snde_general} is equivalent to a Hessenberg index-2 DAE (see \cref{app:daes} for more details).

\paragraph{Practical implementation.}
This leaves us to find a concrete instantiation of the stabilization matrix $F(u)$ that should (a) satisfy that $F(u) G(u)$ is symmetric positive definite with the smallest eigenvalue bounded away from zero near $\mathcal{M}$, (b) be efficiently computable, and (c) be compatible with gradient-based optimization of $\theta$ as part of an NDE.
In our experiments, we use the Moore-Penrose pseudoinverse of the Jacobian of $g$ at $u$ as the stabilization matrix,
\begin{equation}\label{eq:stabilization_matrix}
  F(u) = G^+(u) = G^T(u)\bigl(G(u)G^T(u)\bigr)^{-1} \in \sR^{n \times m}.
\end{equation}
Let us analyze the properties of this choice.
Regarding the requirements (b) and (c), the pseudoinverse can be computed efficiently via a singular value decomposition with highly optimized implementations in all common numerical linear algebra libraries (including deep learning frameworks) and does not interfere with gradient-based optimization.
In particular, the computational cost for the pseudoinverse is $\mathcal{O}(m^2 n)$, i.e., it scales well with the problem size.
The quadratic scaling in the number of constraints is often tolerable in practice, since the number of constraints is typically small (in \cref{app:faster}, we discuss faster alternatives to the pseudoinverse that can be used, for example, when there are many constraints).
Moreover, the Jacobian $G(u)$ of $g$ can be obtained via automatic differentiation in the respective frameworks.

Regarding requirement (a), the pseudoinverse $G^+(u)$ is an orthogonal projection onto the tangent space $T_u\mathcal{M}$ of the manifold at $u$.
Hence, locally in a neighborhood of $u \in \mathcal{M}$, we consider the stabilization matrix as a projection back onto the invariant manifold $\mathcal{M}$ (see \cref{fig:vector_field}).
In particular, $G(u)$ has full rank for $u \in \mathcal{M}$ and $G^+ G = G^T (G G^T)^{-1} G$ is symmetric and positive definite near $\mathcal{M}$.
From here on, we thus consider the following specific form for the SNDE in \cref{snde_general},
\begin{equation} \label{snde_specific}
  \boxed{
  \dot u = f_{\theta}(u) - \gamma G^{+}(u) g(u).
  } \qquad \text{\color{gray}[practical SNDE]}
\end{equation}

\section{Results}
\label{results}

We now demonstrate the effectiveness of SNDEs on examples that cover autonomous first and second order systems with either a conserved first integral of motion or holonomic constraints, a non-autonomous first order system with a conserved quantity, a non-autonomous controlled first order system with a time-dependent constraint stemming from the control, and a chaotic second order system with a conservation law.
To demonstrate the flexibility of our approach with respect to the variant of underlying NDE model, for the first three experiments we include both vanilla NODEs and augmented NODEs as baselines and apply stabilization to both (with the stabilized variants labeled SNODE and SANODE, respectively).

As a metric for the predicted state $\hat u(t)$ versus ground truth $u(t)$, we use the relative error $\|u(t) - \hat u(t)\|_2 / \|u(t)\|_2$, averaged over many trajectories with independent initial conditions.
As a metric for the constraints, we compute analogous relative errors for $g(u)$.

In \cref{app:gamma}, we further demonstrate empirically that SNDEs are insensitive to the specific choice of $\gamma$ over a large range (beyond a minimum value, consistent with \Cref{theorem2}).
We also provide more intuition about choosing $\gamma$ and the computational implications of this choice.
In practice, we find that SNDEs are easy to use across a wide variety of settings with minimal tuning and incur only moderate training overhead compared to vanilla NDEs.
In some cases, SNDEs are even computationally cheaper than vanilla NDEs at inference time despite the cost of computing the pseudoinverse.
Finally, \cref{app:add_experiments} provides results of additional experiments not included here due to space constraints.

\renewcommand{\floatpagefraction}{0.8}
\begin{figure}
  \centering
  \vspace{-2mm}
  \includegraphics[width=\textwidth]{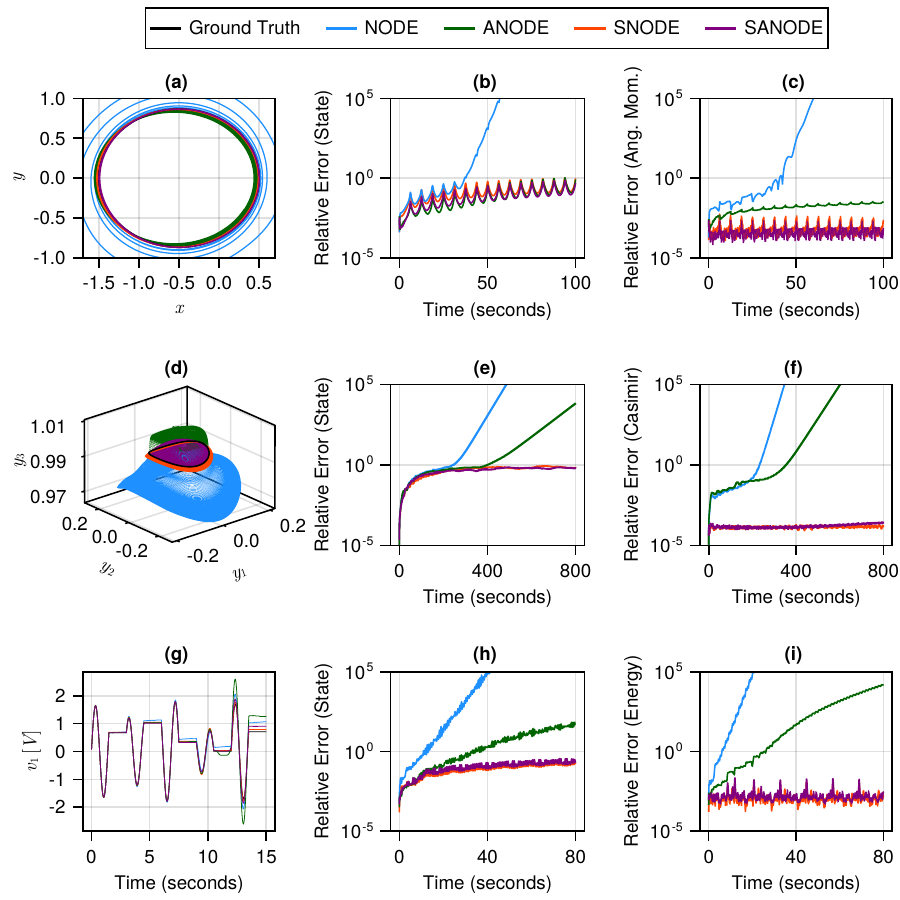}
  \caption{
    \textbf{Top row:} Results for the two-body problem experiment, showing a single test trajectory in (a) and averages over 100 test trajectories in (b-c).
    \textbf{Middle row:} Results for the rigid body rotation experiment, showing a single test trajectory in (d) and averages over 100 test trajectories in (e-f).
    \textbf{Bottom row:} Results for the DC-to-DC converter experiment, showing the voltage $v_1$ across the first capacitor during a single test trajectory in (g), and averages over 100 test trajectories in (h-i).
    The vanilla NODE (blue) is unstable in all settings, quickly drifting from the constraint manifold and subsequently diverging exponentially, while the vanilla ANODE (green) is unstable for the rigid body and DC-to-DC converter experiments.
    In contrast, the SNODE (red) and SANODE (purple) are constrained to the manifold with accurate predictions over a long horizon in all settings.
    Confidence intervals are not shown as they diverge along with the unstabilized trajectories.
    }\label{fig:figure_2}
    \vspace{-3mm}
\end{figure}

\subsection{Two-Body Problem \texorpdfstring{\addon{second order, autonomous, non-chaotic, conservation law}}{}}
The motion of two bodies attracting each other with a force inversely proportional to their squared distance (e.g., gravitational interaction in the non-relativistic limit) can be written as
\begin{equation}
    \ddot{x} = -\frac{x}{(x^2 + y^2)^{3/2}},\qquad\ddot{y} = -\frac{y}{(x^2 + y^2)^{3/2}},
\end{equation}
where one body is fixed at the origin and $x,y$ are the (normalized) Cartesian coordinates of the other body in the plane of its orbit \citep{hairer06geometric}.
We stabilize the dynamics with respect to the conserved angular momentum $L$, yielding
\begin{equation}
    \mathcal{M} = \{ (x, y) \in \sR^2 \,;\, x\dot{y} + y\dot{x} - L_0 = 0\},
\end{equation}
where $L_0$ is the initial value of $L$.
We train on 40 trajectories with initial conditions $(x, y, \dot x, \dot y) = (1-e, 0, 0, \sqrt{\nicefrac{1-e}{1+e}})$, where the eccentricity $e$ is sampled uniformly via $e \sim U(0.5, 0.7)$.
Each trajectory consists of a single period of the orbit sampled with a timestep of $\Delta t = 0.1$.

The top row of \cref{fig:figure_2} shows that SNODEs, ANODEs and SANODEs all achieve stable long-term prediction over multiple orbits, while unstabilized NODEs diverge exponentially from the correct orbit.

\subsection{Motion of a Rigid Body \texorpdfstring{\addon{first order, autonomous, non-chaotic, holonomic constraint}}{}}
The angular momentum vector $y = (y_1, y_2, y_3)^T$ of a rigid body with arbitrary shape and mass distribution satisfies Euler's equations of motion,
\begin{equation}
    \begin{pmatrix}
        \dot y_1 \\ \dot y_2 \\ \dot y_3
    \end{pmatrix}
    = 
    \begin{pmatrix}
        0 & -y_3 & y_2 \\
        y_3 & 0 & -y_1 \\
        -y_2 & y_1 & 0
    \end{pmatrix}
    \begin{pmatrix}
        \nicefrac{y_1}{I_1} \\
        \nicefrac{y_2}{I_2} \\
        \nicefrac{y_3}{I_3}
    \end{pmatrix},
\end{equation}
where the coordinate axes are the principal axes of the body, $I_1,I_2,I_3$ are the principal moments of inertia, and the origin of the coordinate system is fixed at the body's centre of mass \citep{hairer06geometric}.
The motion of $y$ conserves the Casimir function
    $C(y) = \tfrac{1}{2}\left(y_1^2 + y_2^2 + y_3^2\right)$,
which is equivalent to conservation of angular momentum in the orthogonal body frame and constitutes a holonomic constraint on the allowed states of the system.
We therefore have the manifold
\begin{equation}
    \mathcal{M} = \{ (y_1, y_2, y_3) \in \sR^3 \,;\, y_1^2 + y_3^2 + y_3^2 - C_0 = 0 \}.
\end{equation}
We train on 40 trajectories with initial conditions $(y_1, y_2, y_3) = (\cos(\phi), 0, \sin(\phi))$, where $\phi$ is drawn from a uniform distribution $\phi \sim U(0.5, 1.5)$.
Each trajectory consists of a 15 second sample with a timestep of $\Delta t = 0.1$ seconds.

The middle row of \cref{fig:figure_2} demonstrates that, unlike vanilla NODEs and ANODEs, SNODEs and SANODEs are constrained to the sphere and stabilize the predicted dynamics over a long time horizon in this first order system.

\subsection{DC-to-DC Converter \texorpdfstring{\addon{first order, non-autonomous, non-chaotic, conservation law}}{}}

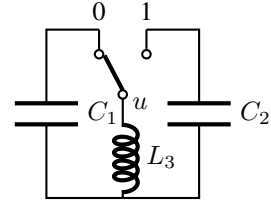
\begin{wrapfigure}{r}{0.25\textwidth}
\vspace{-7mm}
  \centering
  \begin{circuitikz}[thick,scale=0.7]
    \node[spdt, rotate=90] (sw) {};
    \draw   (sw.in) node[above right] {$u$}   to [L={$L_3$}] ++ (0,-1.5)
                    coordinate (aux1)
        (sw.out 2)  node[above] {1}   to [short] ++ (+1,0) coordinate (aux3)
        to [C={$C_2$}] (aux3 |- aux1) to [short] (aux1);
    \draw (sw.out 1)  node[above] {0}   to [short] ++ (-1,0) coordinate (aux2)
                    to [C={$C_1$}]    (aux2 |- aux1)
                    to [short] (aux1);
  \end{circuitikz}
  \caption{Idealized schematic of a DC-to-DC converter.}\label{fig:dctodc}
\end{wrapfigure}

We now consider an idealized DC-to-DC converter \citep{leonard1994control,fiori2021manifold}, illustrated in \cref{fig:dctodc}, with dynamics
\begin{equation}
\label{eq:circuit}
        C_1 \dot v_1 = (1-u)i_3, \quad
        C_2 \dot v_2 = u i_3, \quad 
        L_3 \dot i_3 = -(1-u)v_1 - u v_2,
\end{equation}
where $v_1, v_2$ are the state voltages across capacitors $C_1, C_2$, respectively, $i_3$ is the state current across an inductor $L_3$, and $u \in \{0,1\}$ is a control input (a switch) that can be used to transfer energy between the two capacitors via the inductor.
The total energy in the circuit,
    $E = \tfrac{1}{2} (C_1 v_1^2 + C_2 v_2^2 + L_3 i_3^2)$,
is conserved, yielding the manifold
\begin{equation}
    \mathcal{M} = \{ (v_1, v_2, i_3) \in \sR^3 \,;\, C_1 v_1^2 + C_2 v_2^2 + L_3 i_3^2 - E_0 = 0 \}.
\end{equation}
We train on 40 trajectories integrated over $10$ seconds with a timestep of $\Delta t = 0.1$ seconds, where $C_1 = 0.1$, $C_2 = 0.2$, $L_3 = 0.5$, and a switching period of $3$ seconds, i.e., the switch is toggled every $1.5$ seconds.
The initial conditions for $(v_1, v_2, i_3)$ are each drawn independently from a uniform distribution $U(0, 1)$.

The bottom row of \cref{fig:figure_2} shows the voltage across $C_1$ over multiple switching events (g), with the NODE and ANODE quickly accumulating errors every time the switch is applied, while the SNODE and SANODE remain accurate for longer.
Panels (h,i) show the familiar exponentially accumulating errors for vanilla NODE and ANODE, versus stable relative errors for SNODE and SANODE.

\subsection{Controlled Robot Arm \texorpdfstring{\addon{first order, non-autonomous, non-chaotic, time-dependent control}}{}}
Next, we apply SNDEs to solve a data-driven inverse kinematics problem \citep{park2022node}, that is, learning the dynamics of a robot arm that satisfy a prescribed path $p(t)$.
We consider an articulated robot arm consisting of three connected segments of fixed length 1, illustrated in \cref{fig:robot_arm}(a).
Assuming one end of the first segment is fixed at the origin and the robot arm is restricted to move in a plane, the endpoint $e(\theta)$ of the last segment is given by
\begin{equation}
    e(\theta) = 
    \begin{pmatrix}
        \cos(\theta_1) + \cos(\theta_2) + \cos(\theta_3) \\
        \sin(\theta_1) + \sin(\theta_2) + \sin(\theta_3)
    \end{pmatrix},
\end{equation}
where $\theta_j$ is the angle of the $j$-th segment with respect to the horizontal and $\theta = (\theta_1, \theta_2, \theta_3)$.
The problem consists of finding the motion of the three segments $\theta(t)$ such that the endpoint $e(\theta)$ follows a prescribed path $p(t)$ in the plane, i.e., $e(\theta) = p(t)$.
Minimizing $||\dot \theta(t)||$, it can be shown \citep{hairer11solving} that the optimal path satisfies
\begin{equation}
    \dot \theta = e'(\theta)^T\bigl(e'(\theta)e'(\theta)^T\bigr)^{-1}\dot p(t),
\end{equation}
where $e'$ is the Jacobian of $e$.
These will be our ground truth equations of motion.

We stabilize the SNODE with respect to the (time-dependent) manifold
\begin{equation}
    \mathcal{M} = \{ (\theta, t) \in \sS \times \sR \,;\, e(\theta) - p(t) = 0\}.
\end{equation}
In particular, we prescribe the path
\begin{equation}
    p(t) = e_0 - 
    \begin{pmatrix}
        \nicefrac{\sin(2\pi t)}{2\pi} \\
        0
    \end{pmatrix},
\end{equation}
where $e_0$ is the initial position of the endpoint, such that $e(\theta)$ traces a line back and forth on the $x$-axis.
We train on 40 trajectories of duration $5$ seconds, with timestep $\Delta t = 0.1$ and initial conditions $(\theta_1, \theta_2, \theta_3) = (\theta_0, -\theta_0, \theta_0)$, where $\theta_0$ is drawn from a uniform distribution $\theta_0 \sim U(\pi/4, 3\pi/8)$.
Additionally, we provide the network with $\dot p$, the time derivative of the prescribed control.

\cref{fig:robot_arm} shows that the unconstrained NODE drifts substantially from the prescribed path during a long integration, while the SNODE implements the control to a high degree of accuracy and without drift.

\begin{figure}
  \vspace{-6mm}
  \centering
  \adjustbox{valign=c}{\includegraphics{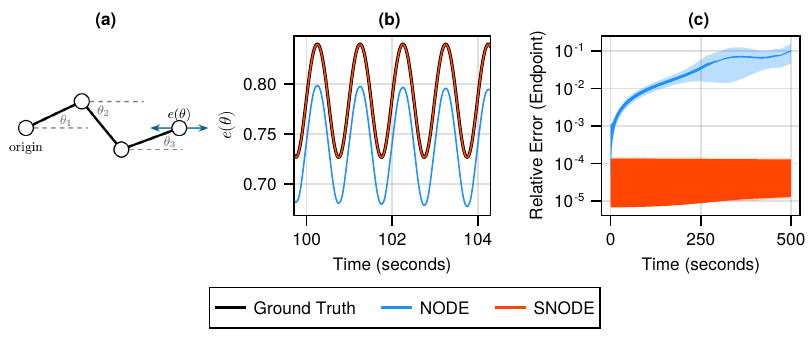}}
  \caption{
    Controlled robot arm.
    \textbf{(a)} Schematic of the robot arm.
    \textbf{(b)} Snapshot of a single test trajectory. After 100 seconds the NODE (blue) has drifted significantly from the prescribed control while the SNODE (red) accurately captures the ground truth dynamics (black).
    \textbf{(c)} Relative error in the endpoint $e(\theta)$ averaged over 100 test trajectories.
    The NODE (blue) accumulates errors and leaves the prescribed path, while the SNODE (red) remains accurate. Shadings in (c) are 95\% confidence intervals.}
  \label{fig:robot_arm}
\end{figure}

\subsection{Double Pendulum \texorpdfstring{\addon{second order, autonomous, chaotic, conservation law}}{}}
Finally, we apply stabilization to the chaotic dynamics of the frictionless double pendulum system.
The total energy $E$ of the system is conserved \citep{arnold2013mathematical}, yielding the manifold,
\begin{equation}
    \mathcal{M} = \{ (\theta_1, \theta_2, \omega_1, \omega_2) \in \sS^2 \times \sR^2 \,;\, E(\theta_1, \theta_2, \omega_1, \omega_2) - E_0 = 0\},
\end{equation}
where $\theta_i$ is the angle of the $i$-th arm with the vertical and $\omega_i = \dot \theta_i$.
We refer the reader to \citet{arnold2013mathematical} (or the excellent \href{https://en.wikipedia.org/wiki/Double_pendulum}{Wikipedia entry}) for the lengthy equations of motion and an expression for the total energy.
For simplicity we take $m_1 = m_2 = 1\,\mathrm{kg}$, $l_1 = l_2 = 1\,\mathrm{m}$, and $g = 9.81 \,\mathrm{ms^{-2}}$.
We train on 40 trajectories, each consisting of 10 seconds equally sampled with $\Delta t = 0.05$, and with initial conditions $(\theta_1, \theta_2, \omega_1, \omega_2) = (\phi, \phi, 0, 0)$, where $\phi$ is drawn randomly from a uniform distribution $\phi \sim U(\pi/4, 3\pi/4)$.
We emphasize that this is a highly limited amount of data when it comes to describing the chaotic motion of the double pendulum system, intended to highlight the effect of stabilization in the low-data regime.

\Cref{fig:double_pendulum}(a-b) shows that, while initially the SNODE only marginally outperforms the vanilla NODE in terms of the relative error of the state, the longer term relative error in energy is substantially larger for NODE than for SNODE.
A certain relative error in state is indeed unavoidable for chaotic systems.

In addition to predicting individual trajectories of the double pendulum, we also consider an additional important task: learning the invariant measure of this chaotic system.
This can be motivated by analogy with climate predictions, where one also focuses on long-term prediction of the invariant measure of the system, as opposed to predicting individual trajectories in the sense of weather forecasting, which must break down after a short time due to the highly chaotic underlying dynamics.
Motivated by hybrid models of the Earth's climate \citep{gelbrecht2023differentiable,shen2023differentiable}, we choose a slightly different training strategy than before, namely a hybrid setup in line with the UDE approach mentioned above.
In particular, the dynamics of the first arm $\ddot \theta_1$ are assumed known, while the dynamics of the second arm $\ddot \theta_2$ are learned from data.
We train on a \emph{single trajectory} of duration 60 seconds with $\Delta t = 0.05$.
For each trained model, we then integrate ten trajectories of duration one hour -- far longer than the observed data.
An invariant measure is estimated from each long trajectory (see \cref{app:invariant_measure}) and compared with the ground truth using the Hellinger distance.

\Cref{fig:double_pendulum}(c) shows that, as $\gamma$ is increased, our ability to accurately learn the double pendulum's invariant measure increases dramatically due to stabilization, demonstrating that the ``climate'' of this system is captured much more accurately by the SNDE than by the NDE baseline.

\begin{figure}
   \vspace{-6mm}
  \centering
  \includegraphics[width = \textwidth]{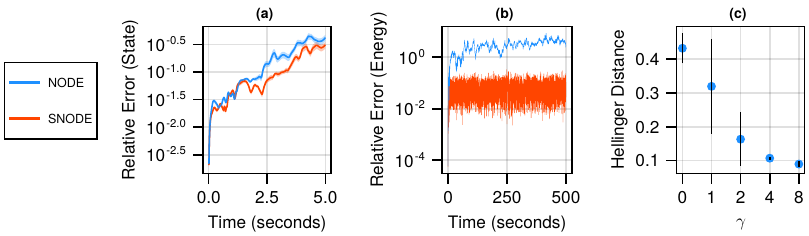}
  \caption{
    Results for the double pendulum.
    \textbf{(a)} Relative error in the state over 300 short test trials, shown with 95\% confidence intervals (shaded).
    Compared to the SNODE, the NODE diverges rapidly as it begins to accumulate errors in the energy.
    \textbf{(b)} Relative error in the energy averaged over 5 long test trials.
    \textbf{(c)} Comparison of the double pendulum's invariant measure estimated by the (hybrid) UDE, with and without stabilization, versus ground truth, with 95\% confidence intervals.
  }\label{fig:double_pendulum}
  \vspace{-3mm}
\end{figure}

\section{Conclusion}
\label{conclusion}
\vspace{-1mm}

We have introduced stabilized neural differential equations (SNDEs), a method for learning dynamical systems from observational data subject to arbitrary explicit constraints. 
Our approach is based on a stabilization term that is cheap to compute and provably renders the invariant manifold asymptotically stable while still admitting all solutions of the vanilla NDE on the invariant manifold.
Key benefits of our method are its simplicity and generality, making it compatible with all common NDE methods without requiring further architectural changes.
Crucially, SNDEs allow entirely new types of constraints, such as those arising from known conservation laws and controls, to be incorporated into neural differential equation models.
We demonstrate their effectiveness across a range of settings, including first and second order systems, autonomous and non-autonomous systems, with constraints stemming from holonomic constraints, conserved first integrals of motion, as well as time-dependent restrictions on the system state.
SNDEs are robust with respect to the only tuneable parameter and incur only moderate computational overhead compared to vanilla NDEs.

The current key limitations and simultaneously interesting directions for future work include adapting existing methods for NODEs on Riemannian manifolds to our setting, generalizations to partial differential equations, allowing observations and constraints to be provided in different coordinates, and scaling the method to high-dimensional settings such as learning dynamics from pixel observations, for example in fluid dynamics or climate modeling.
Finally, we emphasize that high-dimensional, nonlinear dynamics may not be identifiable from just a small number of solution trajectories.
Hence, care must be taken when using learned dynamics in high-stakes scenarios (e.g., human robot interactions), especially when going beyond the training distribution.

\clearpage
\section*{Acknowledgments}
This work was funded by the Volkswagen Foundation.
The authors would like to thank Christof Schötz, Philipp Hess and Michael Lindner for many insightful discussions while preparing this work, as well as the anonymous reviewers whose thoughtful feedback significantly improved the paper.
The authors also thank the creators and maintainers of the Julia programming language \citep{bezanson2017julia} and the open-source packages DifferentialEquations.jl \citep{rackauckas2017differential}, Flux.jl \citep{innes2018fashionable}, Zygote.jl \citep{innes2018dont}, ComplexityMeasures.jl \citep{agasoster2023complexity} and Makie.jl \citep{danisch2021makie}, all of which were essential in preparing this work.
The authors gratefully acknowledge the European Regional Development Fund (ERDF), the German Federal Ministry of Education and Research and the Land Brandenburg for supporting this project by providing resources on the high performance computer system at the Potsdam Institute for Climate Impact Research.

\bibliography{bibliography}

\begin{thebibliography}{77}
\providecommand{\natexlab}[1]{#1}
\providecommand{\url}[1]{\texttt{#1}}
\expandafter\ifx\csname urlstyle\endcsname\relax
  \providecommand{\doi}[1]{doi: #1}\else
  \providecommand{\doi}{doi: \begingroup \urlstyle{rm}\Url}\fi

\bibitem[Ablin and Peyr\'e(2022)]{ablin2022fast}
Pierre Ablin and Gabriel Peyr\'e.
\newblock Fast and accurate optimization on the orthogonal manifold without
  retraction.
\newblock In \emph{Proceedings of The 25th International Conference on
  Artificial Intelligence and Statistics}, pages 5636--5657. PMLR, 2022.

\bibitem[Absil et~al.(2008)Absil, Mahony, and Sepulchre]{absil2008optimization}
Pierre-Antoine Absil, Robert Mahony, and Rodolphe Sepulchre.
\newblock \emph{Optimization Algorithms on Matrix Manifolds}.
\newblock Princeton University Press, 2008.

\bibitem[Aliee et~al.(2021)Aliee, Theis, and Kilbertus]{aliee2021beyond}
Hananeh Aliee, Fabian~J. Theis, and Niki Kilbertus.
\newblock Beyond predictions in neural {ODEs}: Identification and
  interventions.
\newblock \emph{arXiv preprint arXiv:2106.12430}, 2021.

\bibitem[Aliee et~al.(2022)Aliee, Richter, Solonin, Ibarra, Theis, and
  Kilbertus]{aliee2022sparsity}
Hananeh Aliee, Till Richter, Mikhail Solonin, Ignacio Ibarra, Fabian~J. Theis,
  and Niki Kilbertus.
\newblock Sparsity in continuous-depth neural networks.
\newblock In \emph{Advances in Neural Information Processing Systems}, pages
  901--914, 2022.

\bibitem[Arjovsky et~al.(2016)Arjovsky, Shah, and Bengio]{arjovsky2016unitary}
Martin Arjovsky, Amar Shah, and Yoshua Bengio.
\newblock Unitary evolution recurrent neural networks.
\newblock In \emph{Proceedings of The 33rd International Conference on Machine
  Learning}, pages 1120--1128. PMLR, 2016.

\bibitem[Arnold et~al.(1995)Arnold, Jones, Mischaikow, Raugel, and
  Arnold]{arnold1995random}
Ludwig Arnold, Christopher~K.R.T. Jones, Konstantin Mischaikow, Genevi{\`e}ve
  Raugel, and Ludwig Arnold.
\newblock \emph{Random Dynamical Systems}.
\newblock Springer, 1995.

\bibitem[Arnold(1989)]{arnold2013mathematical}
Vladimir~Igorevich Arnold.
\newblock \emph{Mathematical Methods of Classical Mechanics}.
\newblock Springer New York, 2nd edition, 1989.

\bibitem[Ascher and Petzold(1998)]{ascher1998computer}
Uri~M. Ascher and Linda~R. Petzold.
\newblock \emph{Computer Methods for Ordinary Differential Equations and
  Differential-Algebraic Equations}.
\newblock Society for Industrial and Applied Mathematics, 1998.

\bibitem[Bansal et~al.(2018)Bansal, Chen, and Wang]{bansal2018can}
Nitin Bansal, Xiaohan Chen, and Zhangyang Wang.
\newblock Can we gain more from orthogonality regularizations in training deep
  networks?
\newblock In \emph{Advances in Neural Information Processing Systems}, pages
  4261--4271, 2018.

\bibitem[Baumgarte(1972)]{baumgarte1972stabilization}
J.~Baumgarte.
\newblock Stabilization of constraints and integrals of motion in dynamical
  systems.
\newblock \emph{Computer Methods in Applied Mechanics and Engineering},
  1\penalty0 (1):\penalty0 1--16, 1972.

\bibitem[Becker et~al.(2023)Becker, Klein, Neitz, Parascandolo, and
  Kilbertus]{becker2023discovering}
S\"{o}ren Becker, Michal Klein, Alexander Neitz, Giambattista Parascandolo, and
  Niki Kilbertus.
\newblock Predicting ordinary differential equations with transformers.
\newblock In \emph{Proceedings of the 40th International Conference on Machine
  Learning}, pages 1978--2002. PMLR, 2023.

\bibitem[Ben-Hamu et~al.(2022)Ben-Hamu, Cohen, Bose, Amos, Nickel, Grover,
  Chen, and Lipman]{benhamu2022matching}
Heli Ben-Hamu, Samuel Cohen, Joey Bose, Brandon Amos, Maximillian Nickel,
  Aditya Grover, Ricky T.~Q. Chen, and Yaron Lipman.
\newblock Matching normalizing flows and probability paths on manifolds.
\newblock In \emph{Proceedings of the 39th International Conference on Machine
  Learning}, pages 1749--1763. PMLR, 2022.

\bibitem[Bezanson et~al.(2017)Bezanson, Edelman, Karpinski, and
  Shah]{bezanson2017julia}
Jeff Bezanson, Alan Edelman, Stefan Karpinski, and Viral~B Shah.
\newblock Julia: A fresh approach to numerical computing.
\newblock \emph{SIAM {R}eview}, 59\penalty0 (1):\penalty0 65--98, 2017.

\bibitem[Bose et~al.(2020)Bose, Smofsky, Liao, Panangaden, and
  Hamilton]{bose2020latent}
Joey Bose, Ariella Smofsky, Renjie Liao, Prakash Panangaden, and Will Hamilton.
\newblock Latent variable modelling with hyperbolic normalizing flows.
\newblock In \emph{Proceedings of the 37th International Conference on Machine
  Learning}, pages 1045--1055. PMLR, 2020.

\bibitem[Brenan et~al.(1995)Brenan, Campbell, and Petzold]{brenan1995numerical}
Kathryn~E. Brenan, Stephen~L. Campbell, and Linda~R. Petzold.
\newblock \emph{Numerical Solution of Initial-Value Problems in
  Differential-Algebraic Equations}.
\newblock Society for Industrial and Applied Mathematics, 1995.

\bibitem[Brunton et~al.(2016)Brunton, Proctor, and
  Kutz]{brunton2016discovering}
Steven~L. Brunton, Joshua~L. Proctor, and J.~Nathan Kutz.
\newblock Discovering governing equations from data by sparse identification of
  nonlinear dynamical systems.
\newblock \emph{Proceedings of the National Academy of Sciences}, 113\penalty0
  (15):\penalty0 3932--3937, 2016.

\bibitem[Chekroun et~al.(2011)Chekroun, Simonnet, and
  Ghil]{chekroun2011stochastic}
Micka{\"e}l~D. Chekroun, Eric Simonnet, and Michael Ghil.
\newblock Stochastic climate dynamics: Random attractors and time-dependent
  invariant measures.
\newblock \emph{Physica D: Nonlinear Phenomena}, 240\penalty0 (21):\penalty0
  1685--1700, 2011.

\bibitem[Chen et~al.(2018)Chen, Rubanova, Bettencourt, and
  Duvenaud]{chen2018neural}
Ricky~T.Q. Chen, Yulia Rubanova, Jesse Bettencourt, and David~K. Duvenaud.
\newblock Neural ordinary differential equations.
\newblock In \emph{Advances in Neural Information Processing Systems}, pages
  6571--6583, 2018.

\bibitem[Chen et~al.(2021)Chen, Matsubara, and Yaguchi]{chen2021neural}
Yuhan Chen, Takashi Matsubara, and Takaharu Yaguchi.
\newblock Neural symplectic form: learning {Hamiltonian} equations on general
  coordinate systems.
\newblock pages 16659--16670, 2021.

\bibitem[Chen et~al.(2022)Chen, Feng, Yan, and Zha]{chen2022learning}
Zhijie Chen, Mingquan Feng, Junchi Yan, and Hongyuan Zha.
\newblock Learning neural {Hamiltonian} dynamics: A methodological overview.
\newblock \emph{arXiv preprint arXiv:2203.00128}, 2022.

\bibitem[Chin(1995)]{chin1995stabilization}
Hong~Sheng Chin.
\newblock \emph{Stabilization Methods for Simulations of Constrained Multibody
  Dynamics}.
\newblock PhD thesis, University of British Columbia, 1995.
\newblock URL
  \url{https://open.library.ubc.ca/collections/ubctheses/831/items/1.0080031}.

\bibitem[Cramer(1999)]{cramer1999mathematical}
Harald Cramer.
\newblock \emph{Mathematical Methods of Statistics}.
\newblock Princeton University Press, 1999.

\bibitem[Cranmer(2023)]{cranmer2023interpretable}
Miles Cranmer.
\newblock Interpretable machine learning for science with {PySR} and
  {SymbolicRegression.jl}.
\newblock \emph{arXiv preprint arXiv:2305.01582}, 2023.

\bibitem[Cranmer et~al.(2020)Cranmer, Greydanus, Hoyer, Battaglia, Spergel, and
  Ho]{cranmer2020lagrangian}
Miles Cranmer, Sam Greydanus, Stephan Hoyer, Peter Battaglia, David Spergel,
  and Shirley Ho.
\newblock {Lagrangian} neural networks.
\newblock \emph{arXiv preprint arXiv:2003.04630}, 2020.

\bibitem[Cybenko(1989)]{cybenko1989approximation}
George Cybenko.
\newblock Approximation by superpositions of a sigmoidal function.
\newblock \emph{Mathematics of Control, Signals and Systems}, 2\penalty0
  (4):\penalty0 303--314, 1989.

\bibitem[Dandekar et~al.(2020)Dandekar, Chung, Dixit, Tarek, Garcia-Valadez,
  Vemula, and Rackauckas]{dandekar2020bayesian}
Raj Dandekar, Karen Chung, Vaibhav Dixit, Mohamed Tarek, Aslan Garcia-Valadez,
  Krishna~Vishal Vemula, and Christopher Rackauckas.
\newblock Bayesian neural ordinary differential equations.
\newblock \emph{arXiv preprint arXiv:2012.07244}, 2020.

\bibitem[Danisch and Krumbiegel(2021)]{danisch2021makie}
Simon Danisch and Julius Krumbiegel.
\newblock {Makie.jl}: Flexible high-performance data visualization for {Julia}.
\newblock \emph{Journal of Open Source Software}, 6\penalty0 (65):\penalty0
  3349, 2021.

\bibitem[Diego et~al.(2019)Diego, Haaga, and Hannisdal]{diego2019transfer}
David Diego, Kristian~Agas{\o}ster Haaga, and Bjarte Hannisdal.
\newblock Transfer entropy computation using the {Perron-Frobenius} operator.
\newblock \emph{Phys. Rev. E}, 99:\penalty0 042212, 2019.

\bibitem[Duong and Atanasov(2021)]{duong2021hamiltonian}
Thai~P. Duong and Nikolay~A. Atanasov.
\newblock {Hamiltonian}-based neural {ODE} networks on the {SE(3)} manifold for
  dynamics learning and control.
\newblock In \emph{Proceedings of Robotics: Science and Systems}, 2021.

\bibitem[Dupont et~al.(2019)Dupont, Doucet, and Teh]{dupont2019augmented}
Emilien Dupont, Arnaud Doucet, and Yee~Whye Teh.
\newblock Augmented neural {ODEs}.
\newblock In \emph{Advances in Neural Information Processing Systems}, pages
  3140--3150, 2019.

\bibitem[Finlay et~al.(2020)Finlay, Jacobsen, Nurbekyan, and
  Oberman]{finlay2020train}
Chris Finlay, Joern-Henrik Jacobsen, Levon Nurbekyan, and Adam Oberman.
\newblock How to train your neural {ODE}: the world of {J}acobian and kinetic
  regularization.
\newblock In \emph{Proceedings of the 37th International Conference on Machine
  Learning}, pages 3154--3164. PMLR, 2020.

\bibitem[Finzi et~al.(2020)Finzi, Wang, and Wilson]{finzi2020simplifying}
Marc Finzi, Ke~Alexander Wang, and Andrew~G Wilson.
\newblock Simplifying {Hamiltonian} and {Lagrangian} neural networks via
  explicit constraints.
\newblock In \emph{Advances in Neural Information Processing Systems}, pages
  13880--13889, 2020.

\bibitem[Fiori(2021)]{fiori2021manifold}
Simone Fiori.
\newblock Manifold calculus in system theory and control—fundamentals and
  first-order systems.
\newblock \emph{Symmetry}, 13\penalty0 (11), 2021.

\bibitem[Gear(1988)]{gear1988differential}
C.~William Gear.
\newblock Differential-algebraic equation index transformations.
\newblock \emph{SIAM Journal on Scientific and Statistical Computing},
  9\penalty0 (1):\penalty0 39--47, 1988.

\bibitem[Gelbrecht et~al.(2021)Gelbrecht, Boers, and
  Kurths]{gelbrecht2021neural}
Maximilian Gelbrecht, Niklas Boers, and Jürgen Kurths.
\newblock Neural partial differential equations for chaotic systems.
\newblock \emph{New Journal of Physics}, 23\penalty0 (4):\penalty0 043005,
  2021.

\bibitem[Gelbrecht et~al.(2023)Gelbrecht, White, Bathiany, and
  Boers]{gelbrecht2023differentiable}
Maximilian Gelbrecht, Alistair White, Sebastian Bathiany, and Niklas Boers.
\newblock Differentiable programming for earth system modeling.
\newblock \emph{Geoscientific Model Development}, 16\penalty0 (11):\penalty0
  3123--3135, 2023.

\bibitem[Gemici et~al.(2016)Gemici, Rezende, and
  Mohamed]{gemici2016normalizing}
Mevlana~C. Gemici, Danilo Rezende, and Shakir Mohamed.
\newblock Normalizing flows on {Riemannian} manifolds.
\newblock \emph{arXiv preprint arXiv:1611.02304}, 2016.

\bibitem[Ghosh et~al.(2020)Ghosh, Behl, Dupont, Torr, and
  Namboodiri]{ghosh2020steer}
Arnab Ghosh, Harkirat Behl, Emilien Dupont, Philip Torr, and Vinay Namboodiri.
\newblock {STEER}: Simple temporal regularization for neural {ODEs}.
\newblock In \emph{Advances in Neural Information Processing Systems}, pages
  14831--14843, 2020.

\bibitem[Greydanus et~al.(2019)Greydanus, Dzamba, and
  Yosinski]{greydanus2019hamiltonian}
Samuel Greydanus, Misko Dzamba, and Jason Yosinski.
\newblock {Hamiltonian} neural networks.
\newblock In \emph{Advances in Neural Information Processing Systems}, pages
  15379--15389, 2019.

\bibitem[Gruver et~al.(2022)Gruver, Finzi, Stanton, and
  Wilson]{gruver2022deconstructing}
Nate Gruver, Marc~Anton Finzi, Samuel~Don Stanton, and Andrew~Gordon Wilson.
\newblock Deconstructing the inductive biases of {Hamiltonian} neural networks.
\newblock In \emph{The Tenth International Conference on Learning
  Representations}, 2022.

\bibitem[Haaga and Datseris()]{agasoster2023complexity}
Kristian~Agasøster Haaga and George Datseris.
\newblock {ComplexityMeasures.jl}.
\newblock \emph{Zenodo}.

\bibitem[Hairer(2011)]{hairer11solving}
Ernst Hairer.
\newblock Solving differential equations on manifolds.
\newblock 2011.
\newblock URL \url{https://www.unige.ch/~hairer/poly-sde-mani.pdf}.

\bibitem[Hairer et~al.(2006)Hairer, Lubich, and Wanner]{hairer06geometric}
Ernst Hairer, Christian Lubich, and Gerhard Wanner.
\newblock \emph{Geometric Numerical Integration}.
\newblock Springer-Verlag, Berlin, 2nd edition, 2006.

\bibitem[Hornik et~al.(1989)Hornik, Stinchcombe, and
  White]{hornik1989multilayer}
Kurt Hornik, Maxwell Stinchcombe, and Halbert White.
\newblock Multilayer feedforward networks are universal approximators.
\newblock \emph{Neural Networks}, 2\penalty0 (5):\penalty0 359--366, 1989.

\bibitem[Innes(2018)]{innes2018dont}
Michael Innes.
\newblock Don't unroll adjoint: Differentiating {SSA}-form programs.
\newblock \emph{CoRR}, abs/1810.07951, 2018.

\bibitem[Innes et~al.(2018)Innes, Saba, Fischer, Gandhi, Rudilosso, Joy,
  Karmali, Pal, and Shah]{innes2018fashionable}
Michael Innes, Elliot Saba, Keno Fischer, Dhairya Gandhi, Marco~Concetto
  Rudilosso, Neethu~Mariya Joy, Tejan Karmali, Avik Pal, and Viral Shah.
\newblock Fashionable modelling with {Flux}.
\newblock \emph{CoRR}, abs/1811.01457, 2018.

\bibitem[Kelly et~al.(2020)Kelly, Bettencourt, Johnson, and
  Duvenaud]{kelly2020learning}
Jacob Kelly, Jesse Bettencourt, Matthew~J. Johnson, and David~K. Duvenaud.
\newblock Learning differential equations that are easy to solve.
\newblock In \emph{Advances in Neural Information Processing Systems}, pages
  4370--4380, 2020.

\bibitem[Kiani et~al.(2022)Kiani, Balestriero, LeCun, and Lloyd]{kiani2022proj}
Bobak Kiani, Randall Balestriero, Yann LeCun, and Seth Lloyd.
\newblock {projUNN}: efficient method for training deep networks with unitary
  matrices.
\newblock In \emph{Advances in Neural Information Processing Systems}, pages
  14448--14463, 2022.

\bibitem[Kidger(2021)]{kidger2021on}
Patrick Kidger.
\newblock \emph{On Neural Differential Equations}.
\newblock PhD thesis, University of Oxford, 2021.

\bibitem[Kidger et~al.(2021)Kidger, Chen, and Lyons]{kidger2021hey}
Patrick Kidger, Ricky~T.Q. Chen, and Terry~J Lyons.
\newblock "{Hey}, that’s not an {ODE}": Faster {ODE} adjoints via seminorms.
\newblock In \emph{Proceedings of the 38th International Conference on Machine
  Learning}, pages 5443--5452. PMLR, 2021.

\bibitem[Kim et~al.(2021)Kim, Ji, Deng, Ma, and Rackauckas]{kim2021stiff}
Suyong Kim, Weiqi Ji, Sili Deng, Yingbo Ma, and Christopher Rackauckas.
\newblock Stiff neural ordinary differential equations.
\newblock \emph{Chaos: An Interdisciplinary Journal of Nonlinear Science},
  31\penalty0 (9), 2021.

\bibitem[Leonard and Krishnaprasad(1994)]{leonard1994control}
Naomi~E. Leonard and P.S. Krishnaprasad.
\newblock Control of switched electrical networks using averaging on {Lie}
  groups.
\newblock In \emph{Proceedings of 1994 33rd IEEE Conference on Decision and
  Control}, pages 1919--1924, 1994.

\bibitem[Lezcano~Casado(2019)]{lezcano2019trivializations}
Mario Lezcano~Casado.
\newblock Trivializations for gradient-based optimization on manifolds.
\newblock In \emph{Advances in Neural Information Processing Systems}, pages
  9157--9168, 2019.

\bibitem[Lezcano-Casado and Mart\'{\i}nez-Rubio(2019)]{lezcano2019cheap}
Mario Lezcano-Casado and David Mart\'{\i}nez-Rubio.
\newblock Cheap orthogonal constraints in neural networks: A simple
  parametrization of the orthogonal and unitary group.
\newblock In \emph{Proceedings of the 36th International Conference on Machine
  Learning}, pages 3794--3803. PMLR, 2019.

\bibitem[Lim and Kasim(2022)]{lim2022unifying}
Yi~Heng Lim and Muhammad~Firmansyah Kasim.
\newblock Unifying physical systems' inductive biases in neural {ODE} using
  dynamics constraints.
\newblock \emph{arXiv preprint arXiv:2208.02632}, 2022.

\bibitem[Loshchilov and Hutter(2019)]{loshchilov2019decoupled}
Ilya Loshchilov and Frank Hutter.
\newblock Decoupled weight decay regularization.
\newblock In \emph{7th International Conference on Learning Representations},
  2019.

\bibitem[Lou et~al.(2020)Lou, Lim, Katsman, Huang, Jiang, Lim, and
  De~Sa]{lou2020neural}
Aaron Lou, Derek Lim, Isay Katsman, Leo Huang, Qingxuan Jiang, Ser~Nam Lim, and
  Christopher~M. De~Sa.
\newblock Neural manifold ordinary differential equations.
\newblock In \emph{Advances in Neural Information Processing Systems}, pages
  17548--17558, 2020.

\bibitem[Lutter et~al.(2019)Lutter, Ritter, and Peters]{lutter2019deep}
Michael Lutter, Christian Ritter, and Jan Peters.
\newblock Deep {Lagrangian} networks: Using physics as model prior for deep
  learning.
\newblock In \emph{7th International Conference on Learning Representations},
  2019.

\bibitem[Ma et~al.(2021)Ma, Dixit, Innes, Guo, and
  Rackauckas]{ma2021comparison}
Yingbo Ma, Vaibhav Dixit, Michael~J Innes, Xingjian Guo, and Christopher
  Rackauckas.
\newblock A comparison of automatic differentiation and continuous sensitivity
  analysis for derivatives of differential equation solutions.
\newblock In \emph{2021 IEEE High Performance Extreme Computing Conference
  (HPEC)}, pages 1--9. IEEE, 2021.

\bibitem[Mathieu and Nickel(2020)]{mathieu2020riemannian}
Emile Mathieu and Maximilian Nickel.
\newblock {Riemannian} continuous normalizing flows.
\newblock In \emph{Advances in Neural Information Processing Systems}, pages
  2503--2515, 2020.

\bibitem[Nishimori(1999)]{nishimori1999learning}
Y.~Nishimori.
\newblock Learning algorithm for independent component analysis by geodesic
  flows on orthogonal group.
\newblock In \emph{IJCNN'99. International Joint Conference on Neural Networks.
  Proceedings (Cat. No.99CH36339)}, volume~2, pages 933--938 vol.2, 1999.
\newblock \doi{10.1109/IJCNN.1999.831078}.

\bibitem[Norcliffe et~al.(2020)Norcliffe, Bodnar, Day, Simidjievski, and
  Li{\`o}]{norcliffe2020on}
Alexander Norcliffe, Cristian Bodnar, Ben Day, Nikola Simidjievski, and Pietro
  Li{\`o}.
\newblock On second order behaviour in augmented neural {ODEs}.
\newblock In \emph{Advances in Neural Information Processing Systems}, pages
  5911--5921, 2020.

\bibitem[Pal et~al.(2021)Pal, Ma, Shah, and Rackauckas]{pal2021opening}
Avik Pal, Yingbo Ma, Viral Shah, and Christopher Rackauckas.
\newblock Opening the blackbox: Accelerating neural differential equations by
  regularizing internal solver heuristics.
\newblock In \emph{Proceedings of the 38th International Conference on Machine
  Learning}, pages 8325--8335. PMLR, 2021.

\bibitem[Park et~al.(2022)Park, Schwartz, and Park]{park2022node}
Suhan Park, Mathew Schwartz, and Jaeheung Park.
\newblock {NODE IK}: Solving inverse kinematics with neural ordinary
  differential equations for path planning.
\newblock \emph{arXiv preprint arXiv:2209.00498}, 2022.

\bibitem[Petzold(1982)]{petzold1982differential}
Linda Petzold.
\newblock Differential/algebraic equations are not {ODE’s}.
\newblock \emph{SIAM Journal on Scientific and Statistical Computing},
  3\penalty0 (3):\penalty0 367--384, 1982.

\bibitem[Rackauckas and Nie(2017)]{rackauckas2017differential}
Christopher Rackauckas and Qing Nie.
\newblock {DifferentialEquations.jl}--a performant and feature-rich ecosystem
  for solving differential equations in {Julia}.
\newblock \emph{Journal of Open Research Software}, 5\penalty0 (1), 2017.

\bibitem[Rackauckas et~al.(2020)Rackauckas, Ma, Martensen, Warner, Zubov,
  Supekar, Skinner, Ramadhan, and Edelman]{rackauckas2020universal}
Christopher Rackauckas, Yingbo Ma, Julius Martensen, Collin Warner, Kirill
  Zubov, Rohit Supekar, Dominic Skinner, Ali Ramadhan, and Alan Edelman.
\newblock Universal differential equations for scientific machine learning.
\newblock \emph{arXiv preprint arXiv:2001.04385}, 2020.

\bibitem[Rezende et~al.(2020)Rezende, Papamakarios, Racaniere, Albergo, Kanwar,
  Shanahan, and Cranmer]{rezende2020normalizing}
Danilo~Jimenez Rezende, George Papamakarios, Sebastien Racaniere, Michael
  Albergo, Gurtej Kanwar, Phiala Shanahan, and Kyle Cranmer.
\newblock Normalizing flows on tori and spheres.
\newblock In \emph{Proceedings of the 37th International Conference on Machine
  Learning}, pages 8083--8092. PMLR, 2020.

\bibitem[Rozen et~al.(2021)Rozen, Grover, Nickel, and Lipman]{rozen2021moser}
Noam Rozen, Aditya Grover, Maximilian Nickel, and Yaron Lipman.
\newblock Moser flow: Divergence-based generative modeling on manifolds.
\newblock In \emph{Advances in Neural Information Processing Systems}, pages
  17669--17680, 2021.

\bibitem[Rubanova et~al.(2019)Rubanova, Chen, and Duvenaud]{rubanova2019latent}
Yulia Rubanova, Ricky~T.Q. Chen, and David~K. Duvenaud.
\newblock Latent ordinary differential equations for irregularly-sampled time
  series.
\newblock In \emph{Advances in Neural Information Processing Systems}, pages
  5320--5330, 2019.

\bibitem[Shen et~al.(2023)Shen, Appling, Gentine, Bandai, Gupta, Tartakovsky,
  Baity-Jesi, Fenicia, Kifer, Li, Liu, Ren, Zheng, Harman, Clark, Farthing,
  Feng, Kumar, Aboelyazeed, Rahmani, Song, Beck, Bindas, Dwivedi, Fang, Höge,
  Rackauckas, Mohanty, Roy, Xu, and Lawson]{shen2023differentiable}
Chaopeng Shen, Alison~P. Appling, Pierre Gentine, Toshiyuki Bandai, Hoshin
  Gupta, Alexandre Tartakovsky, Marco Baity-Jesi, Fabrizio Fenicia, Daniel
  Kifer, Li~Li, Xiaofeng Liu, Wei Ren, Yi~Zheng, Ciaran~J. Harman, Martyn
  Clark, Matthew Farthing, Dapeng Feng, Praveen Kumar, Doaa Aboelyazeed,
  Farshid Rahmani, Yalan Song, Hylke~E. Beck, Tadd Bindas, Dipankar Dwivedi,
  Kuai Fang, Marvin Höge, Chris Rackauckas, Binayak Mohanty, Tirthankar Roy,
  Chonggang Xu, and Kathryn Lawson.
\newblock Differentiable modelling to unify machine learning and physical
  models for geosciences.
\newblock \emph{Nature Reviews Earth {\&} Environment}, 4\penalty0
  (8):\penalty0 552--567, 2023.

\bibitem[Teshima et~al.(2020)Teshima, Tojo, Ikeda, Ishikawa, and
  Oono]{teshima2020universal}
Takeshi Teshima, Koichi Tojo, Masahiro Ikeda, Isao Ishikawa, and Kenta Oono.
\newblock Universal approximation property of neural ordinary differential
  equations.
\newblock \emph{arXiv preprint arXiv:2012.02414}, 2020.

\bibitem[Tsitouras(2011)]{tsitouras2011runge}
Ch. Tsitouras.
\newblock {Runge--Kutta} pairs of order 5 (4) satisfying only the first column
  simplifying assumption.
\newblock \emph{Computers \& Mathematics with Applications}, 62\penalty0
  (2):\penalty0 770--775, 2011.

\bibitem[Verner(2010)]{verner2010numerical}
James~H. Verner.
\newblock Numerically optimal {Runge–Kutta} pairs with interpolants.
\newblock \emph{Numerical Algorithms}, 53\penalty0 (2-3):\penalty0 383–396,
  2010.

\bibitem[Zhang et~al.(2020)Zhang, Gao, Unterman, and
  Arodz]{zhang2020approximation}
Han Zhang, Xi~Gao, Jacob Unterman, and Tom Arodz.
\newblock Approximation capabilities of neural {ODE}s and invertible residual
  networks.
\newblock In \emph{Proceedings of the 37th International Conference on Machine
  Learning}, pages 11086--11095. PMLR, 2020.

\bibitem[Zhong et~al.(2020)Zhong, Dey, and Chakraborty]{zhong2019symplectic}
Yaofeng~Desmond Zhong, Biswadip Dey, and Amit Chakraborty.
\newblock {Symplectic ODE-Net}: Learning {Hamiltonian} dynamics with control.
\newblock In \emph{8th International Conference on Learning Representations},
  2020.

\bibitem[Zhu et~al.(2020)Zhu, Jin, and Tang]{zhu2020deep}
Aiqing Zhu, Pengzhan Jin, and Yifa Tang.
\newblock Deep {Hamiltonian} networks based on symplectic integrators.
\newblock \emph{arXiv preprint arXiv:2004.13830}, 2020.

\end{thebibliography}

\clearpage
\appendix

\section{Differential Algebraic Equations}
\label{app:daes}

A differential algebraic equation (DAE) in its most general, implicit form is
\begin{equation}
  \label{implicit_dae}
  F(t, x, \dot{x}) = 0,
\end{equation}
where $x \in \sR^n$ and $F \colon \sR \times \sR^n \times \sR^n \to \sR^n$.
When $\partial F / \partial \dot{x}$ is nonsingular, \cref{implicit_dae} is an implicit ODE and by the implicit function theorem may be written as an explicit ODE in the form $\dot x = f(x, t)$ \citep{gear1988differential}.
In the more interesting case of singular $\partial F / \partial \dot{x}$, an important special case of \cref{implicit_dae} is given by semi-explicit DAEs in Hessenberg form, for example,
\begin{subequations}
  \label{semi_explicit_dae}
  \begin{align}
    \dot{y} &= f(t, y, z)\label{semi_explicit_dae_a} \\
    0 &= g(t,y,z),\label{semi_explicit_dae_b}
  \end{align}
\end{subequations}
where $x = (y, z)$.
We call $y$ the \emph{differential variables}, since their derivatives appear in the equations, and $z$ the \emph{algebraic variables}, since their derivatives do not.
The semi-explicit form of \cref{semi_explicit_dae} highlights the connection between certain classes of DAEs and ODEs subject to constraints.

It is generally possible to differentiate the constraints in \cref{semi_explicit_dae_b} a number of times and substitute the result into \cref{semi_explicit_dae_a} to obtain a mathematically equivalent ODE.
The number of differentiations required to do so is the \emph{differential index} of the DAE, and corresponds loosely to the ``distance'' of the DAE from an equivalent ODE.
The differential index -- and related measures of index not directly based on differentiation -- is used extensively to classify DAEs, especially in the context of numerical methods for their solution \citep{brenan1995numerical}.
Each differentiation of the constraints reduces the index of the system by one (ODEs have index 0).

Of particular interest in the context of this paper are constrained ODEs of the form
\begin{subequations}
  \label{constrained_ode}
  \begin{align}
    \dot{u} &= f(t, u) \\
    0 &= g(t,u),
  \end{align}
\end{subequations}
where $u \in \sR^n$, $f \colon \sR \times \sR^n \to \sR^n$, and $g \colon \sR \times \sR^n \to \sR^m$.
\Cref{constrained_ode} can be written as a semi-explicit Hessenberg index-2 DAE, with $u$ as the differential variables and $m$ Lagrange multipliers as the algebraic variables, for example,
\begin{subequations}
  \label{constrained_ode_hess}
  \begin{align}
    \dot{u} &= f(t, u) - D(u)\lambda \\
    0 &= g(t,u),
  \end{align}
\end{subequations}
where $\lambda \in \sR^m$ and $D(u)$ is any bounded matrix function such that $GD$, where $G = g_u$ is the Jacobian of $g$, is boundedly invertible for all $t$ \citep{ascher1998computer}.
We can therefore approach the task of solving the constrained ODE \cref{constrained_ode} from the perspective of solving the Hessenberg index-2 DAE \cref{constrained_ode_hess}.

DAEs are not ODEs, however, and a number of additional complications arise when we seek numerical solutions \citep{petzold1982differential}.
Generally, the higher the index, the harder it is to solve a given DAE.
For this reason, it is common to first perform an index reduction (i.e. differentiate the constraints) before applying numerical methods.
However, the numerical solution of the resulting index-reduced system may exhibit \emph{drift off} from the invariant manifold defined by the original constraints.
For this reason, \citet{baumgarte1972stabilization} proposed a stabilization procedure for index-reduced DAEs that renders the invariant manifold asymptotically stable.
Baumgarte's stabilization is, in turn, a special case of the stabilization procedure later proposed by \citet{chin1995stabilization} and adopted by us in this paper.
We emphasize, however, that our stabilization procedure addresses a different application and problem than these related methods; while Baumgarte and Chin sought to stabilize drift off from the invariant manifold due to discretization error in the numerical solution of an index-reduced DAE, we seek to constrain some learned dynamics imperfectly approximated by a neural network.

Finally, one may ask why a neural network could not be incorporated directly into \cref{constrained_ode_hess} and the resulting index-2 DAE solved directly.
While possible in principle, DAEs require implicit numerical methods, with the result that the computational complexity of computing gradients of solutions -- whether via automatic differentiation or adjoint sensitivity analysis -- scales with the cube of the system size \citep{kim2021stiff}.
In contrast, backpropagating through explicit solvers has linear complexity.

\section{Faster Alternatives to the Pseudoinverse}
\label{app:faster}
The computational cost of the pseudoinverse is $\mathcal{O}(m^2 n)$, where $m$ is the number of constraints and $n$ is the dimension of the problem.
While cheap to compute for the systems studied in this paper, the pseudoinverse may become prohibitively expensive for applications with a large number of constraints.
As an alternative choice for the stabilization matrix $F(u)$ in \cref{snde_general}, we recommend using the transpose of the Jacobian $G(u) = g_u$, i.e.,
\begin{equation}
    F(u) = G^T(u),
\end{equation}
which is instead $\mathcal{O}(1)$.
Since $G^T(u)G(u)$ is symmetric positive definite, the theoretical guarantees of \cref{theorem1} and \cref{theorem2} still apply, such that
\begin{equation} 
\label{eq:snde_alternative}
  \boxed{
  \dot u = f_{\theta}(u) - \gamma G^T(u) g(u)
  } \qquad \text{\color{gray}[alternative SNDE]}
\end{equation}
is a valid SNDE.

We apply an SNDE in the form of \cref{eq:snde_alternative} to the controlled robot arm experiment, in which the Jacobian has dimensions $2 \times 3$.
In this instance, a single evaluation of $G^T(u)$ is approximately 10 times faster than the pseudoinverse $G^+(u)$, which translates to a 10\% speedup when evaluating the entire right-hand side of the SNDE (the cost of evaluating of the neural network still dominates in this setting). 

\cref{fig:snde_fast} shows that the alternative SNDE formulation implements the control effectively and illustrates the tradeoff when choosing the stabilization matrix $F(u)$.
The pseudoinverse $G^+$ can be expensive to compute but, as an orthogonal projection, yields the most ``direct'' stabilization to the constraint manifold and therefore the most accurate implementation of the constraint.
The transpose $G^T$, on the other hand, is always cheap to compute and yields a valid stabilization, at the cost of only slightly larger errors in the constraint.

\begin{figure}
  \centering
  \includegraphics{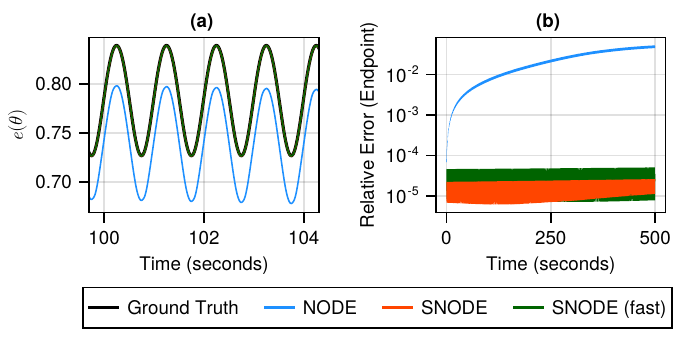}
  \caption{
    Comparison of the alternative, faster SNDE formulation in \cref{eq:snde_alternative} with the standard SNDE formulation in \cref{snde_specific}, applied to the controlled robot arm experiment.
    \textbf{(a)} After 100 seconds, both the fast SNODE (green) and the standard SNODE (orange, not visible behind green) continue to implement the prescribed path exactly, while the NODE (blue) has drifted significantly.
    \textbf{(b)} Relative errors in the position of the endpoint, averaged over 10 test trajectories.
    The fast SNODE (green) admits slightly larger errors than the standard SNODE (orange), although the difference is negligible in this setting.
  }\label{fig:snde_fast}
\end{figure}

\section{The Stabilization Parameter \texorpdfstring{$\gamma$}{gamma}}
\label{app:gamma}
\subsection{Runtime Implications}
We assess the computational cost of SNODEs compared to vanilla NODEs.
SNODEs require the computation of (and backpropagation through) the pseudoinverse of the Jacobian of the constraint function $g$.
Additionally, as $\gamma$ is increased, SNODEs may require more solver steps at a given error tolerance, with the SNODE eventually becoming stiff for sufficiently large $\gamma$.
Naively, one may thus expect a noticeable increase in runtime.
However, as described in \cref{background}, the computational cost of training NODEs also depends on the ``complexity'' of the learned dynamics, which in turn determines how many function evaluations are required by the solver.
This leads to nontrivial interactions between the added computation of enforcing constraints and the thereby potentially regularized ``simpler'' dynamics, which may require fewer function evaluations by the solver.

In \cref{tab:training_time}, we report comparisons of training times between SNODEs and NODEs for different values of $\gamma$ for three settings.
SNODEs take roughly 1.2 to 1.8 times longer to train, with smaller values of $\gamma$ incurring less overhead.
Overall, this is a manageable increase for most relevant scenarios.

We also show inference times in \cref{tab:inference_time}.
Here, the trend reverses and larger values of $\gamma$ lead to lower inference times.
This is because the solver requires fewer steps (has higher acceptance rates of proposed step sizes) for stronger stabilization.
Hence, while predictive performance is largely unaffected, one can use the specific choice of $\gamma$ as a tuning knob that trades off training time versus inference time.

\subsection{Constraint Implications}
To complement these results, \Cref{fig:gamma} shows that the relative error remains almost unchanged for a large range of $\gamma$ values, that is, beyond a certain minimum value, SNODEs are not sensitive to the specific choice of $\gamma$.
Even a value of $\gamma=1$ works well in the settings we have considered, indicating that we can typically get away with runtime increases of a factor of 1.2.
However, larger values of $\gamma$ only lead to slightly increased training times, while generally enforcing the constraints to a higher degree of accuracy.

\subsection{A Practical Guide to Choosing \texorpdfstring{$\gamma$}{gamma}}
As described in the previous sections, choosing the optimal value of $\gamma$ requires a tradeoff between accuracy and training time; larger values of $\gamma$ will enforce the constraints more accurately, at the cost of (mildly) additional training time.
Our experience with the systems in this paper is that stabilization begins to work around $\gamma \sim 1$ and remains effective as $\gamma$ is increased, until the system becomes stiff around $\gamma \sim 100$.
Within those limits, we suggest experimenting with a range of values, for example, powers of two.

Finally, we note that the stabilization term can be switched ``on and off'' at any time during training.
For example, if computing the pseudoinverse is expensive, it may be beneficial to start training without stabilization, before switching the stabilization on once $f_{\theta}$ is close to $f$.

\begin{table}
  \caption{Training time of NODEs vs SNODEs. 
  All experiments are trained for 1,000 epochs on an Intel(R) Xeon(R) CPU E5-2667 v3 @ 3.20GHz.
  Statistics are calculated over 5 random seeds.
  }
  \label{tab:training_time}
  \centering
  \begin{tabular}{cccccccc}
    \toprule
    & & \multicolumn{6}{c}{Training Time (seconds)} \\
    \cmidrule(lr){3-8}
    & & \multicolumn{2}{c}{Two-Body Problem} & \multicolumn{2}{c}{Rigid Body} & \multicolumn{2}{c}{DC-to-DC Converter} \\
    \cmidrule(lr){3-4} \cmidrule(lr){5-6} \cmidrule(lr){7-8}
    Model & $\gamma$ & Mean & Std. Dev. & Mean & Std. Dev. & Mean & Std. Dev. \\
    \midrule
    NODE & - & 10,580 & 271 & 9,730 & 71 & 14,000 & 316 \\
    \midrule
         & 0.1 & 12,060 & 206 & 12,000 & 283 & 18,060 & 524 \\
         & 1 & 12,180 & 194 & 12,500 & 126 & 18,020 & 549 \\
         & 2 & 12,160 & 102 & 12,340 & 301 & 18,280 & 240 \\
    SNODE & 4 & 12,980 & 147 & 14,160 & 280 & 18,020 & 354 \\
         & 8 & 14,000 & 268 & 15,320 & 376 & 18,200 & 482\\
         & 16 & 14,260 & 162 & 15,680 & 519 & - & - \\
         & 32 & 15,300 & 167 & 17,140 & 680 & - & - \\
    \bottomrule
  \end{tabular}
\end{table}

\begin{figure}
  \centering
  \includegraphics[width = \textwidth]{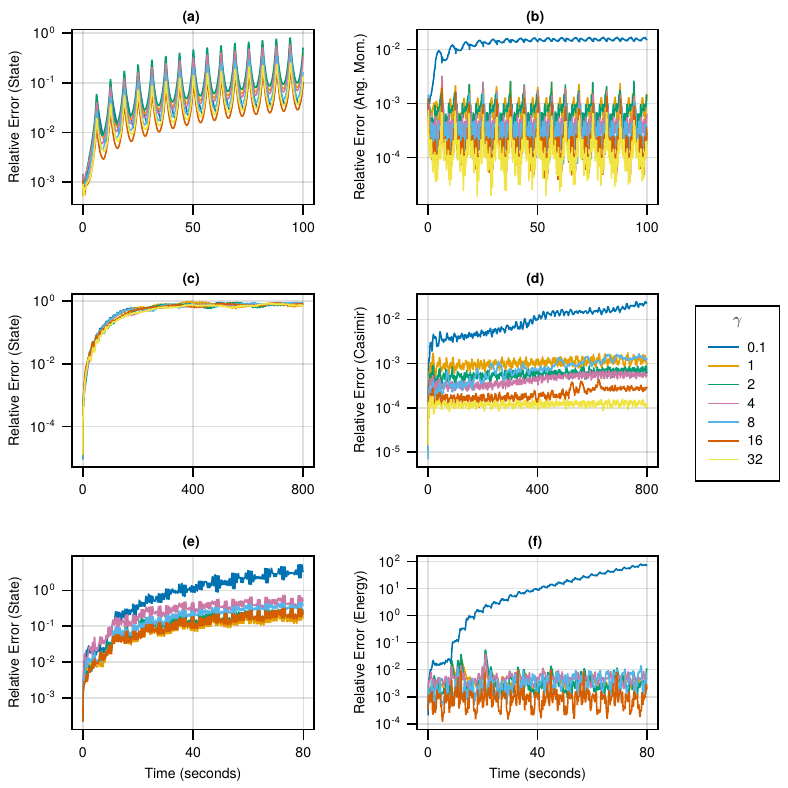}
  \caption{
    Effect of $\gamma$ on relative errors.
    \textbf{Top row:} Two-body problem (a-b).
    \textbf{Middle row:} Rigid body (c-d).
    \textbf{Bottom row:} DC-to-DC converter (e-f).
    Beyond a certain value, SNDEs are not highly sensitive to the choice of $\gamma$, although larger values may enforce the constraints more accurately.
  }\label{fig:gamma}
\end{figure}

\begin{table}
  \caption{Inference time and (adaptive) solver statistics of SNODEs vs NODEs for the two-body problem experiment.
  Inference time statistics are calculated using 100 test initial conditions, each of which is integrated for 20 seconds (short enough so that the NODE solution does not diverge).
  Solver step statistics are reported for a single test trial, intended to illustrate the observed trends in inference time.
  SNODEs are cheaper at inference time due to significantly fewer rejected solver steps.
  }
  \label{tab:inference_time}
  \centering
    \begin{tabular}{ccccccc}
        \toprule
        \multicolumn{2}{c}{Model} & \multicolumn{2}{c}{Inference Time (seconds)} & \multicolumn{3}{c}{Solver Steps} \\
        \cmidrule(lr){1-2} \cmidrule(lr){3-4} \cmidrule(lr){5-7}
        Type & $\gamma$ & Median & Mean & Accepted & Rejected & RHS Evaluations \\
        \midrule
        NODE & - & 2.44 & 2.51 $\pm$ 0.04 & 2,379 & 3,343 & 34,335 \\
        \midrule
             & 0.1 & 2.14 & 2.17 $\pm$ 0.03 & 2,040 & 2,874 & 29,487\\
             & 1 & 2.19 & 2.19 $\pm$ 0.03 & 2,054 & 2,704 & 28,551\\
             & 2 & 2.22 & 2.27 $\pm$ 0.04 & 2,061 & 2,541 & 27,615\\
        SNODE & 4 & 2.07 & 2.15 $\pm$ 0.05 & 2,180 & 2,382 & 27,375\\
             & 8 & 2.05 & 2.07 $\pm$ 0.04 & 2,355 & 1,877 & 25,395\\
             & 16 & 1.98 & 2.03 $\pm$ 0.05 & 2,677 & 1,437 & 24,687\\
             & 32 & 1.96 & 1.99 $\pm$ 0.04 & 3,219 & 1,029 & 25,491\\
        \bottomrule
    \end{tabular}
\end{table}

\section{Additional Experiments}
\label{app:add_experiments}

\subsection{Hamiltonian Neural Networks}
We train a Hamiltonian neural network (HNN) on the two-body problem, which is a Hamiltonian system (\cref{fig:figure7}).
The HNN initially conserves angular momentum, suggesting that it has learned a reasonable approximation of the true Hamiltonina.
However, it becomes unstable after approximately 50 seconds, highlighting the benefits of stabilization when data is limited and conservation laws are known.

\subsection{Tangent Projection Operator}
The invariant manifold of the rigid body experiment is a sphere.
To constrain trajectories to the sphere, we can follow an approach similar to \citet{rozen2021moser} and \citet{benhamu2022matching} and define a tangent projection operator (TPO),
\begin{equation}
    P(u) = I - \frac{uu^T}{||u||^2},
    \label{eq:tpo}
\end{equation}
where $I$ is the identity matrix.
$P(u)$ projects velocities at $u$ to the tangent space  of the sphere $T_u \mathbb{S}^2$.
Applying the tangent projection operator of \cref{eq:tpo} to the standard NDE model $f_{\theta}$ for rigid body dynamics, we obtain
\begin{equation}
    \dot u = P(u) f_{\theta}(u) \in T_u \mathbb{S}^2.
    \label{eq:tpo_ode}
\end{equation}
\cref{eq:tpo_ode} is a manifold ODE.
In general, specialized numerical methods must be used when integrating manifold ODEs to guarantee the numerical solution remains on the manifold \citep{hairer06geometric}.
However, for the sake of this comparison, we find regular ODE solvers to be sufficient in practice.

\cref{fig:figure7}(d) shows that the TPO satisfies the constraint exactly, with relative errors of $\sim 10^{-10}$, compared to $\sim 10^{-4}$ for the SNODE and SANODE.
However, this does not translate to improved prediction of the system state (\cref{fig:figure7}(c)), suggesting that, in this setting, there is little practical benefit to the additional accuracy in the constraint.

\begin{figure}
  \centering
  \includegraphics[width = \textwidth]{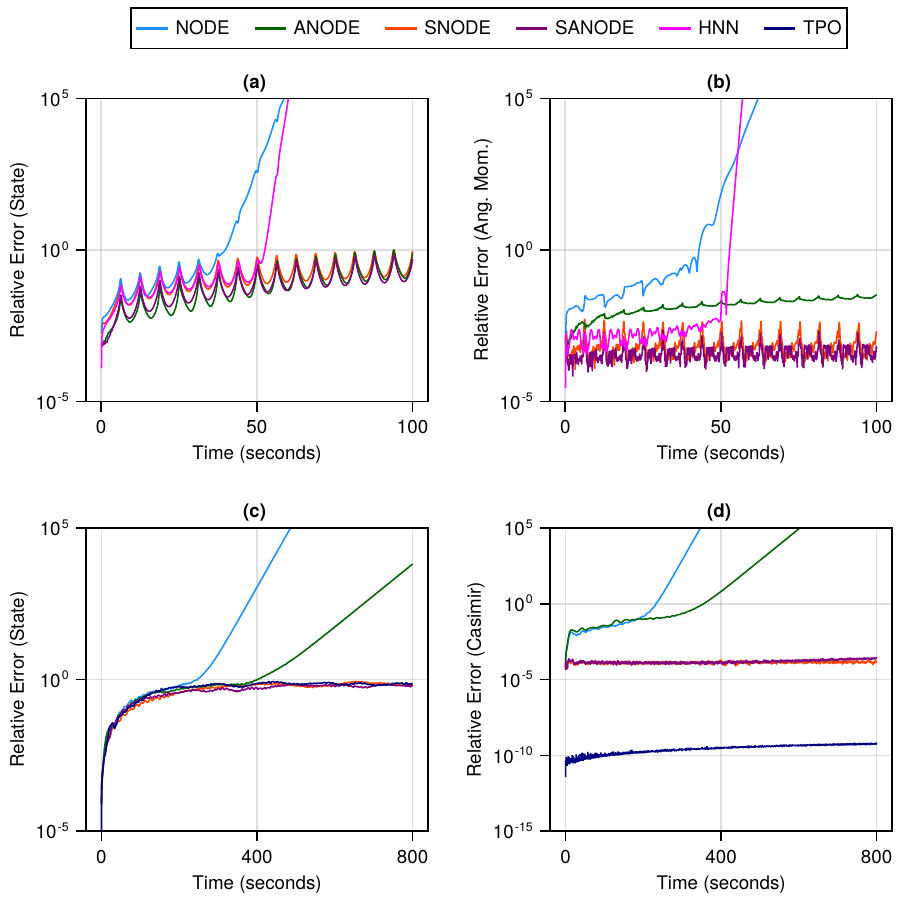}
  \caption{
    \textbf{Top row:} Comparison with a Hamiltonian neural network (HNN) for the two-body problem experiment. 
    \textbf{Bottom row:} Comparison with a tangent projection operator (TPO) for the rigid body experiment.
    Relative errors are calculated using the same 100 test initial conditions as in \cref{fig:figure_2}.
    The HNN (a-b, pink) initially conserves angular momentum but is unstable for some initial conditions.
    The TPO (c-d, navy blue) conserves the Casimir function exactly.
    However, this does not translate to better prediction of the system state.
  }\label{fig:figure7}
\end{figure}

\subsection{Stable Time}
The unstabilized, vanilla NODEs of \cref{fig:figure_2} are characterized by an initial drift from the invariant manifold that gives way to a subsequent rapid divergence.
In practice, however, certain test trials may remain stable for significantly longer than others.
In this section, we characterize the \emph{stable time} $T_\mathrm{stab}$ of an individual test trial as the time elapsed until the relative error $E(t)$ in the predicted system state $\hat u(t)$ exceeds a given threshold value $E_\mathrm{stab}$, i.e.
\begin{equation}
    T_\mathrm{stab} = \max \,\{t \,|\, E(t) < E_\mathrm{stab}\}.
\end{equation}
Taking $E_\mathrm{stab} = 10^3$, \cref{tab:divergence_time} shows $T_\mathrm{stab}$ for the the two-body problem, rigid body, and DC-to-DC converter experiments.
Across several thousand test trials in this paper, not one stabilized NDE model diverged.

\begin{table}
\centering
  \caption{Stable time of NODEs for the same 100 test trials as shown in \cref{fig:figure_2}.
  SNODE models (not shown) did not not diverge during any trial.
  }
  \label{tab:divergence_time}
  \begin{adjustbox}{width=1\textwidth}
    \begin{tabular}{lcccccc}
        \toprule
        & & \multicolumn{5}{c}{NODE Stable Time (seconds)} \\
        \cmidrule(lr){3-7}
        Experiment & Trial Length (seconds) & Min. & Max. & Median & Mean & Std. Dev. \\
        \midrule
        Two-Body Problem & 200.0 & 52.0 & 182.8 & 121.7 & 117.3 & 30.3 \\
        Rigid Body & 1600.0 & 341.7 & 1600.0 & 1600.0 & 1349.9 & 468.4 \\
        DC-to-DC Converter & 160.0 & 19.3 & 160.0 & 113.9 & 110.49 & 45.6 \\
        \bottomrule
    \end{tabular}
    \end{adjustbox}
\end{table}

\section{Invariant Measure}\label{app:invariant_measure}
Given that the double pendulum is a chaotic system, predictions of individual trajectories will break down after short times. 
We therefore also quantify the performance of NODEs and SNDEs in terms of their ability to capture the double pendulum's invariant measure.
We refer to \citet{arnold1995random} and \citet{chekroun2011stochastic} for detailed definitions of invariant measures; in short, a measure $\mu$ is said to be invariant under some flow $\Phi$ if $\mu(\Phi^{-1}(t)(\mathcal A)) = \mu(\mathcal A)$ for all measurable sets $\mathcal A$. 
Invariant measures are commonly used to characterize the long-term dynamical characteristics of chaotic dynamical systems (see, for example, \citet{arnold1995random} or \citet{chekroun2011stochastic} for a discussion of the invariant measure of the paradigmatic Lorenz-63 system). 
Since the double pendulum is an ergodic system, averages over long times approximate ensemble averages.
We can therefore obtain a sample of the invariant measure numerically by integrating the system for a very long time.
Concretely, we estimate the invariant measure from a single long trajectory using an algorithm due to \citet{diego2019transfer}, implemented in ComplexityMeasures.jl \citep{agasoster2023complexity}, based on a numerical estimate of the transfer operator.
We then use the Hellinger distance \citep{cramer1999mathematical} to compare the resulting probability distribution with the ground truth value for the double pendulum.

\section{Architecture and Training}
Trajectories are generated using the 9(8) explicit Runge-Kutta algorithm due to \citet{verner2010numerical}, implemented in DifferentialEquations.jl \citep{rackauckas2017differential} as \texttt{Vern9}.
To ensure that invariants are satisfied exactly in the training data, we use absolute and relative solver tolerances of $10^{-24}$ in conjunction with Julia's \texttt{BigFloat} number type.
Trajectories for training and validation are independent, that is, a given trajectory is used exclusively either for training or validation.
Each trajectory is split using a multiple-shooting approach into non-overlapping chunks of 3 timesteps each.

Networks are implemented using Flux.jl \citep{innes2018fashionable} and consist of fully-connected dense layers with ReLU activation functions.
All experiments are trained for 1,000 epochs using the AdamW optimizer \citep{loshchilov2019decoupled} with weight decay of $10^{-6}$ and an exponentially decaying learning rate schedule.
During training, trajectories are integrated using the 5(4) explicit Runge-Kutta algorithm due to \citet{tsitouras2011runge}, implemented in DifferentialEquations.jl \citep{rackauckas2017differential} as \texttt{Tsit5}, with absolute and relative tolerances of $10^{-6}$

The stabilization hyperparameter $\gamma$ as well as network sizes, learning rates, and the number of additional dimensions for ANODEs are optimized for each experiment and are summarized in \cref{tab:hyperparams}.

\begin{table}
\centering
  \caption{Additional hyperparameters.}
  \label{tab:hyperparams}
  \begin{adjustbox}{width=1\textwidth}
    \begin{tabular}{lccccc}
        \toprule
        & \multicolumn{5}{c}{Experiment} \\
        \cmidrule(lr){2-6}
        & Two-Body Problem & Rigid Body & DC-to-DC Converter & Robot Arm & Double Pendulum \\
        \midrule
        $\gamma$ & 8 & 32 & 8 & 16 & 16 \\
        Hidden Layers & 2 & 2 & 2 & 2 & 2 \\
        Hidden Width & 128 & 64 & 64 & 128 & 128 \\
        Max LR & $10^{-3}$ & $10^{-4}$ & $5 \times 10^{-3}$ & $10^{-3}$ & $10^{-2}$\\
        Min LR & $10^{-5}$ & $10^{-5}$ & $10^{-5}$ & $10^{-5}$ & $10^{-4}$\\
        Augmented Dimension & 2 & 2 & 1 & - & -\\
        \bottomrule
    \end{tabular}
  \end{adjustbox}
\end{table}

\end{document}